\documentclass[runningheads]{llncs}

\usepackage[utf8]{inputenc}
\usepackage{microtype}
\usepackage{amsmath, amssymb}
\usepackage[nodollardollar]{onlyamsmath}
\usepackage{mathtools}
\usepackage{wrapfig}
\usepackage{placeins}
\usepackage{tikz}
\usepackage{xcolor}
\usepackage{hyperref} 
\usepackage{caption}
\usepackage{nicefrac}
\usepackage{subcaption}
\usepackage[linesnumbered, algoruled, vlined]{algorithm2e}
\usepackage{pgfplots}
\usepackage{makecell}
\usepackage{fullpage}

\usetikzlibrary{arrows}
\tikzset{auto, >=stealth}
\tikzset{every edge/.append style={shorten >=1pt}}
\pgfplotsset{width=7cm,compat=1.15}
\pgfplotstableset{col sep=comma}
\usepgfplotslibrary{statistics}




\newcommand{\TracesStyle}[1]{\ensuremath{ \mathit{#1} }}
\newcommand{\Traces}{\ensuremath{ \TracesStyle{S} }}

\newcommand{\traceStyle}[1]{\ensuremath{ \mathit{#1} }}
\newcommand{\trace}{\ensuremath{ \traceStyle{u} }}

\newcommand{\formula}{\ensuremath{ \varphi }}

\newcommand{\Var}{\ensuremath{\mathit{Var}}}
\newcommand{\propVariables}{\ensuremath{\mathcal{P}}}
\newcommand{\operators}{\ensuremath{\Lambda}}

\newcommand{\ltrue}{\mathit{true}}
\DeclareMathOperator{\lfalse}{\mathit{false}}

\DeclareMathOperator{\limplies}{\rightarrow}

\DeclareMathOperator{\lnext}{\mathbf{X}}
\DeclareMathOperator{\luntil}{\mathbf{U}}
\DeclareMathOperator{\leventually}{\mathbf{F}}
\DeclareMathOperator{\lglobally}{\mathbf{G}}
\newcommand{\ltlset}{\ensuremath{\mathcal{F}}}
\newcommand{\LTL}{LTL}
\newcommand{\LTLf}{\LTL{}\textsubscript{f}}


\newcommand{\DT}{\ensuremath{t}}
\newcommand{\score}{\ensuremath{\mathit{s}_l}}
\newcommand{\scorer}{\ensuremath{\mathit{s}_{\mathit{r}}}}
\newcommand{\stopCriteria}{\ensuremath{\mathit{stop}}}
\newcommand{\sfunc}{\ensuremath{\mathit{s}}}
\newcommand{\leaf}{\ensuremath{\mathit{leaf}}}

\newcommand{\loss}{\ensuremath{\mathit{l}}}
\newcommand{\wloss}{\ensuremath{\mathit{wl}}}

\newcommand{\thres}{\ensuremath{\kappa}}
\newcommand{\wt}{\ensuremath{\mathit{w}}}
\newcommand{\wttrace}{\ensuremath{\Omega}}

\newcommand{\valFuncPos}[3]{\ensuremath{V({#1},{#2},{#3})}}
\newcommand{\valFunc}[2]{\ensuremath{V({#1},{#2})}}
\newcommand{\class}{\ensuremath{b}}

\newcommand{\x}[2]{\ensuremath{x_{#1,#2}}}
\newcommand{\lt}[2]{\ensuremath{l_{#1,#2}}}
\newcommand{\rt}[2]{\ensuremath{r_{#1,#2}}}
\newcommand{\y}[3]{\ensuremath{y_{#1,#2}^{#3}}}
\newcommand{\dagConst}[1]{\ensuremath{\Phi_{#1}^{\text{str}}}}
\newcommand{\satisfactionConst}[1]{\ensuremath{\Phi_{#1}^{stf}}}
\newcommand{\semanticConst}[2]{\ensuremath{\Phi^{#1}_{#2}}}
\newcommand{\specDepth}{\ensuremath{n}}
\newcommand{\propFormulaOrig}[1]{\Phi^{\Traces}_{#1}}
\newcommand{\minscore}{\mathit{min\_score}}
\newcommand{\wtratio}{\ensuremath{\Omega_r}}

\SetKwInput{Hyperparameter}{Hyperparameter}
\SetKwInput{Input}{Input}
\SetKwInput{Output}{Output}
\SetKwBlock{Loop}{loop}{end}

\SetKwInOut{Parameter}{Parameter\hspace{-0.2cm}}
\SetKwFunction{AlgoLearnLTL}{\text{Learn-LTL}}
\newcommand{\DTree}{\text{Decision-tree}}
\SetKwFunction{StopCriteria}{\text{Stopping-Criteria}}
\SetKwFunction{AlgoDTree}{\DTree}
\newcommand{\SplitSample}{\text{Split-sample}}
\SetKwFunction{AlgoSplitSample}{\SplitSample}
\newcommand{\Optimize}{\text{Infer-formula}}
\SetKwFunction{AlgoOptimize}{\Optimize}
\newcommand{\OptimizeMaxedSize}{\text{Infer-formula-max}}
\SetKwFunction{AlgoOptimizeMaxedSize}{\OptimizeMaxedSize}
\newcommand{\OptimizeSufficientScore}{\text{Infer-formula-min}}
\SetKwFunction{AlgoOptimizeSufficientScore}{\OptimizeSufficientScore}

\newcommand{\abs}[1]{\ensuremath{|#1|}}

\newcommand{\real}{\ensuremath{\mathbb{R}}}

\newcommand{\nat}{\ensuremath{\mathbb{N}}}

\title{Learning Linear Temporal Properties from Noisy Data:\texorpdfstring{\\}{} A MaxSAT Approach}
\titlerunning{}

\pagestyle{plain}

\author{
	Jean-Raphaël Gaglione\inst{1}
	\and
	Daniel Neider\inst{2}
	\and
	Rajarshi Roy\inst{2}
	\and
	Ufuk Topcu\inst{3}
	\and
	Zhe Xu\inst{4}
}

\institute{
	Ecole Polytechnique, Palaiseau, France \and
	Max Planck Institute for Software Systems, Kaiserslautern, Germany \and
	University of Texas at Austin, TX, USA \and
	Arizona State University, AZ, USA
}

\begin{document}

\maketitle
\setcounter{footnote}{0}

\begin{abstract}
We address the problem of inferring descriptions of system behavior using Linear Temporal Logic (LTL) from a finite set of positive and negative examples. 
Most of the existing approaches for solving such a task rely on predefined templates for guiding the structure of the inferred formula.
The approaches that can infer arbitrary \LTL{} formulas, on the other hand, are not robust to noise in the data.
To alleviate such limitations, we devise two algorithms for inferring concise \LTL{} formulas even in the presence of noise.
Our first algorithm infers minimal \LTL{} formulas by reducing the inference problem to a problem in maximum satisfiability and then using off-the-shelf MaxSAT solvers to find a solution.
To the best of our knowledge, we are the first to incorporate the usage of MaxSAT solvers for inferring formulas in \LTL{}.
Our second learning algorithm relies on the first algorithm to derive a decision tree over \LTL{} formulas based on a decision tree learning algorithm. 
We have implemented both our algorithms and verified that our algorithms are efficient in extracting concise \LTL{} descriptions even in the presence of noise. 

\keywords{Linear Temporal Logic \and Specification Mining \and Explainable AI \and Decision Trees}
\end{abstract}


\section{Introduction}
\label{sec:intro}

Explaining the behavior of complex systems in a form that is interpretable to humans has become a central problem in Artificial Intelligence. 
Applications where having concise system descriptions are essential include debugging, reverse engineering, motion planning, specification mining for formal verification, to name just a few examples.

For inferring descriptions of a system, we rely on a set of positive examples and a set of negative examples generated from the underlying system.
Given such data, the objective is to infer a concise model in a suitable formalism that is consistent with the data; that is, the model must satisfy the positive examples and not satisfy the negative ones.

Most of the data representing AI systems consist of sequences since, more often than not, the properties of these systems evolve over time.
For representing data consisting of sequences, temporal logic has emerged to be a successful and popular formalism.
Among temporal logics, Linear Temporal Logic (LTL), developed by Pneuli~\cite{pnueli}, enjoys being both mathematically rigorous and human interpretable for describing system properties.
Moreover, \LTL{} displays a resemblance to natural language and simultaneously eliminates the ambiguities existing in natural language.
To this end, \LTL{} uses modal operators such as $\leventually$ (``eventually''), $\lglobally$ (``globally''), $\luntil$ (``until''), and several others to describe naturally occurring sequences based on their temporal aspect.
One can use these operators to easily describe properties such as ``the robot should reach the goal and not touch a wall or step into the water in the process'' using $(\neg\mathtt{water} \land \neg \mathtt{wall} ) \luntil \mathtt{goal}$
or ``every request should be followed by a grant eventually'' using $\lglobally(\mathtt{request} \rightarrow(\leventually \mathtt{grant}))$.

The task of inferring temporal logic formulas consistent with a given data has been studied extensively~\cite{Kong2017,Bombara2016,zheletter,zhe_ADHS}.
Most of the existing inference methods, however, typically impose syntactic restrictions on the inferred formula.
In particular, these methods only derive formulas whose structures are based on certain handcrafted templates, which has several drawbacks.
First, handcrafting templates by users may not be a straightforward task since, it requires adequate knowledge about the underlying system.
Second, by restricting the structure of inferred formulas, we potentially increase the size of the inferred formula.

Nevertheless, there are approaches~\cite{DBLP:conf/fmcad/NeiderG18,DBLP:conf/aips/CamachoM19} that avoid the use of templates.
These works present algorithms that rely on reducing the learning problem to a Boolean satisfiability problem (SAT) to infer \LTL{} formulas that perfectly classify the input data.
However, such exact algorithms suffer from the limitation that they are susceptible to failure in the presence of noise which is ubiquitous in real-world data.
Furthermore, trying to infer formulas that perfectly classify a noisy sample often results in complex formulas, hampering interpretability.


In this paper, to alleviate the limitation of the earlier approaches, in this paper we present two novel algorithms for inferring \LTL{} formulas from data provided as a sample consisting of system traces labeled as positive and negative.
We use a variant of \LTL{} that is interpreted over finite traces and is commonly referred to as \LTLf{}, and is of particular interest for several applications related to AI~\cite{DBLP:journals/ai/BacchusK00}.
Now the goal of our algorithms is to infer concise \LTLf{} formulas that achieve a low \emph{loss} on the sample,
where loss $\loss(\Traces, \formula)$ refers to the fraction of examples in the sample $\Traces$ the inferred formula $\formula$ misclassified.
Precisely, the problem we solve is the following: given a sample $\Traces$ and a threshold $\thres$, find a minimal \LTLf{} formula $\formula$ that is consistent with $\Traces$ and has $\loss(\Traces, \formula)\leq\thres$.
Our algorithms are built upon the SAT-based learning algorithms introduced by Neider and Gavran~\cite{DBLP:conf/fmcad/NeiderG18}.
Our first algorithm tackles this problem by reducing the search of an \LTLf{} formula to a problem in maximum satisfiability.
Roughly speaking, we construct formulas in Boolean propositional logic with appropriate weights assigned to its various clauses.
We then search for assignments to the propositional formula that maximize the total weight of the satisfied clauses.
We then show that from an assignment that maximizes the weights of the satisfied clauses we can extract an \LTLf{} formula minimizing loss in a straightforward manner.

Our first algorithm constructs series of monolithic propositional formulas to model the inference problem and is, thus, often inefficient for inferring larger formulas.
Our second algorithm solves the inference problem by dividing the problem into smaller subproblems based on a decision tree learning algorithm.
Instead of finding \LTLf{} formulas that achieve a loss less than $\thres$ in one step, for each decision node in the tree we exploit our first algorithm to infer small \LTLf{} formulas.
Neider and Gavran also propose a similar decision tree based learning algorithm for LTL.
However, our algorithm outperforms theirs in two aspects. 
First, our algorithm is robust to noise in the data. Second, we incorporate a systematic search of \LTLf{} formulas for each decision node, while theirs rely on simple heuristics for searching without termination guarantees.

We have implemented a prototype of both of our algorithms,
and compared them to the algorithms by Neider and Gavran.
To effectuate the evaluation, we used benchmarks that model typical \LTL{} patterns used in practice.
From our observations, we conclude that our algorithms outperform that of Neider and Gavran in terms of running time and formula size, especially in the benchmarks consisting of noise.

\paragraph{Related Work} 
Our approach builds upon that of Neider and Gavran~\cite{DBLP:conf/fmcad/NeiderG18} who exploit a SAT-based inference method.
Similar to their work is the work of Camacho et al~\cite{DBLP:conf/aips/CamachoM19} which uses a SAT-based approach to construct Alternating Finite Automaton consistent with data and extract an \LTLf{} formula from it.
Most of the other works require templates for inferring LTL formulas. 
Among those, one prominent work is that of Kim et al~\cite{DBLP:conf/ijcai/KimMSAS19} as they infer satisfactory \LTLf{} formulas from noisy data, using the Bayesian inference problem.

For the inference of temporal logic formulas, certain works also exploit decision tree learning algorithms. 
One example is the work of Bombara et al~\cite{Bombara2016} which infers Signal Temporal Logic (STL) classifiers based on decision trees.
While their work can infer STL formulas with arbitrary misclassification error on the data, the STL primitives used for the decision nodes in their trees are derived only from a predefined set.
Another work is that of Brunello et al~\cite{DBLP:conf/jelia/BrunelloSS19} which infers decision trees over Interval Temporal Logic. The decision nodes in their trees, as well, are simple formulas; usually consisting of a single temporal relation with a proposition.

The inference problem of temporal logic, in general, has gained popularity in the recent years.
Apart from LTL, this problem has been looked at for a variety of logics, including Past Time Linear Temporal Logic (PLTL)~\cite{DBLP:conf/fmcad/ArifLERCT20}, Signal Temporal Logic (STL)~\cite{zheletter,Jin2015,zhe2016,Kong2017,Asarin2021,Hoxha2017}, Property Specification Language (PSL)~\cite{DBLP:conf/ijcai/0002FN20} and several others~\cite{zheCDC2019GTL,zhe2019ACCinfo,zhe2019privacy}.


\section{Preliminaries}\label{sec:exact-learning}

In this section, we introduce the necessary background required for the paper.
 
\paragraph{Propositional Logic.} Let $\Var$ be a set of propositional variables, which take Boolean values $\{0,1\}$ ($0$ represents $\ltrue$, $1$ represents $\lfalse$). Formulas in propositional logic---denoted by capital Greek letters---are defined inductively as follows:
\[ \Phi \coloneqq x\in\Var \mid \neg \Phi \mid \Phi \lor \Phi \]

Moreover, we add syntactic sugar and allow the formulas $\ltrue$, $\lfalse$, $\Phi\land\Psi$, $\Phi\rightarrow\Psi$ and $\Phi\leftrightarrow\Psi$ which are defined in the standard manner. 

A propositional valuation is a mapping $v\colon \Var \mapsto \{0,1\}$, which maps propositional variables to Boolean values. 
We define the semantics of propositional logic using a valuation function $\valFunc{v}{\Phi}$ that is inductively defined as follows: $\valFunc{v}{x}=v(x)$, $\valFunc{v}{\neg\Psi}= 1-\valFunc{v}{\Psi}$, and $\valFunc{v}{\Psi\lor\Phi}= \mathit{max}\{\valFunc{v}{\Psi} ,\valFunc{v}{\Phi}\}$. 
We say that $v$ satisfies $\Phi$ if  $\valFunc{v}{\Phi}=1$, and call $v$ as a model of $\Phi$. 
A propositional formula $\Phi$ is satisfiable if there exists a model $v$ of $\Phi$.

The satisfiability problem of propositional formula---abbreviated as SAT---is the problem of determining whether a propositional formula is satisfiable or not.
For the SAT problem, usually propositional formulas are assumed to be provided in Conjunctive Normal Form (CNF). 
Formulas in CNF are represented as conjunction of clauses $C_i$, where each clause is a disjunction of literals; a literal being a propositional variable $x$ or its complement $\neg x$.

\paragraph{Finite Traces.} Formally, a \emph{trace} over a set $\propVariables$ of propositional variables (which represent interesting system properties) is a finite sequence of symbols $\trace = a_0a_1\ldots a_n$, where $a_i\in 2^\propVariables$ for $i\in\{0,\cdots,n\}$.
For instance, $\{p,q\}\{p\}\{q\}$ is a trace over the propositional variables $\propVariables=\{p,q\}$. 
The empty trace, denoted by $\epsilon$, is an empty sequence.
The length of a trace is given by $\abs{u}$ (note $\abs{\epsilon}=0$).
Moreover, given a trace $u$ and  $i\in\mathbb{N}$, we use $u[i]$ to denote the symbol at position $i$ (counting starts from $0$). 
Finally, we denote the set of all traces by $(2^{\propVariables})^\ast$.

\paragraph{Linear Temporal Logic.} Linear temporal logic (LTL) is a logic 
that enables reasoning about sequences of events
by extending propositional Boolean logic with temporal modalities. Given a finite set $\propVariables$ of propositional variables, formulas in \LTL{}---denoted by small greek letters---are defined inductively by:
\[ \formula \coloneqq p\in\propVariables \mid \lnot\formula \mid \formula \lor \formula \mid \lnext\formula \mid \formula\luntil\formula \]

As syntactic sugar, we allow the use of additional constants and operator used in propositional logic. 
Additionally, we include temporal operators $\leventually$ (``future'') and $\lglobally$ (``globally'') by 
$\leventually \formula \coloneqq \ltrue \luntil \formula$ and $\lglobally\formula \coloneqq \neg \leventually \neg \formula$.
The set of all operators is defined as $\operators = \{ \neg, \lor, \land, \rightarrow, \lnext, \luntil, \leventually, \lglobally \}~\cup~\propVariables$ 
(propositional variables are considered to be nullary operators). 
We denote the set of all valid \LTL{} formulas as $\ltlset$.
We define the size $\abs{\formula}$ of an LTL formula $\formula$ to be the number of its unique subformulas.
For instance, size of formula $\formula=(p \luntil \lnext q) \lor \lnext q$ is 5, since, the distinct subformulas of $\formula$ are 
$p, q, \lnext q, p \luntil \lnext q$ and $(p \luntil \lnext q) \lor \lnext q$.

We use the semantics of \LTL{} over finite traces, introduced by Giacomo and Vardi~\cite{DBLP:conf/ijcai/GiacomoV13}.
For defining the semantics, we use a valuation function $V$, that maps a formula, a finite trace and a position in the trace to a boolean value.
Formally we define $V$ as follows: $\valFuncPos{p}{\trace}{i} = 1 \text{ if and only if } p\in \trace[i]$, $\valFuncPos{\neg\varphi}{\trace}{i} =  1- \valFuncPos{\varphi}{\trace}{i}$, $\valFuncPos{\varphi\lor \psi}{\trace}{i} = \mathit{max}\{\valFuncPos{\varphi}{\trace}{i}, \valFuncPos{\psi}{\trace}{i}\}$
, $\valFuncPos{\lnext\varphi}{\trace}{i} =  \mathit{min}\{i<\abs{u},\valFuncPos{\varphi}{\trace}{i+1}\}$, $\valFuncPos{\varphi\luntil\psi}{\trace}{i} = \mathit{max}_{i\leq j\leq \abs{u} }\{\valFuncPos{\varphi}{\trace}{j}, \mathit{min}_{i\leq k< j }\valFuncPos{\psi}{\trace}{k}\}$.

We say that a trace $\trace\in(2^\propVariables)^\ast$ satisfies a formula $\formula$ if $\valFuncPos{\formula}{\trace}{0}=1$. 
For the sake of brevity, we use $\valFunc{\trace}{\formula}$ to denote $\valFuncPos{\formula}{\trace}{0}$.

\section{Problem Formulation}

The input data is provided as a sample $\Traces \subset (2^{\propVariables})^\ast \times \{0,1\}$ consisting of labeled traces.
Precisely, sample $\Traces$ is a set of pairs $(\trace, \class)$, where $\trace\in(2^{\propVariables})^\ast$ is a trace and $\class\in\{0,1\}$ is its classification label.
The traces labeled $1$ are called positive traces, while the ones labeled $0$ are called the negative traces. 
We assume that in a sample $(\trace, \class_1)=(\trace, \class_2)$ implies $\class_1=\class_2$, indicating that no trace can be both positive and negative.
Further, we denote the size of $\Traces$, that is, the number of traces in a sample, by $\abs{\Traces}$.

We define a \emph{loss} function which assigns a real value to a given sample $\Traces$ and an \LTLf{} formula $\formula$.
Intuitively, a loss function evaluates how ``well'' the \LTLf{} formula $\formula$ classifies a sample.
While there are numerous ways loss functions can be defined (e.g., quadratic loss function, regret, etc.), we use the definition:
\[\loss(\Traces,\formula) = \sum_{(\trace,\class)\in\Traces}\frac{\abs{\valFunc{\formula}{\trace}-\class}}{\abs{\Traces}},\label{eq:loss} \]which calculates the fraction of traces in $\Traces$ which the \LTLf{} formula $\formula$ misclassified.

Having defined the setting, we now formally describe the problem we solve:
\begin{problem}\label{prob:ltl-learning}
Given a sample $\Traces \subset (2^{\propVariables})^\ast \times \{0,1\}$ and threshold $\thres\in[0,1]$, find an \LTLf{} formula $\formula$ such that $\loss({\Traces},{\formula})\leq\thres$.
\end{problem}

Generally speaking, the above problem is trivial without any constraints on the size of the inferred formula. 
The reason behind is that one can always find a large \LTLf{} formula that achieves a loss of zero.
We can construct such a formula $\formula$ in the following manner:
construct formulas $\formula_{u, v}$, for all $(u,1)\in\Traces$ and $(v,0)\in\Traces$, such that $\valFunc{\formula_{u, v}}{u}=1$ and $\valFunc{\formula_{u, v}}{u}=0$, using a sequence of $\lnext$-operators and an appropriate propositional formula to describe the first symbol where $u$ and $v$ differ; now $\varphi=\bigvee_{(u, 1) \in \Traces} \bigwedge_{(v, 0) \in \Traces} \varphi_{u, v}$ is the desired formula.
The formula $\varphi$, however, is large in size (of the order of $\abs{\Traces}^2\times \mathit{max}_{(\trace,\class)\in\Traces}\abs{u}$) and it does not help towards the goal of inferring a concise description of the data.

Our first algorithm for solving Problem~\ref{prob:ltl-learning}, in fact, infers an \LTLf{} formula that is minimal among the ones that achieve $\loss(\Traces, \formula)\leq\thres$.
We describe the algorithm in Section~\ref{sec:max-sat-learning}.
Our second algorithm infers a decision tree over \LTLf{} formulas, which is not guaranteed to be of minimal size.
However, decision trees are considered to be structures that provide human understandable explanations of the underlying system.
Further, in this algorithm described in Section~\ref{sec:DT-learner}, we allow a tunable parameter that makes it possible to adjust the size of the decision tree.

\section{Learning \LTLf{} formulas minimizing loss}
\label{sec:max-sat-learning}

Our solution to Problem~\ref{prob:ltl-learning} relies on MaxSAT solvers which we introduce next. 

\subsection{MaxSAT}
MaxSAT---a variant of the SAT (Boolean satisfiability) problem---is the problem of finding an assignment that maximizes the number of satisfied clauses in a given propositional formula provided in CNF.
For solving our problem, we use a more general variant of MaxSAT, known as Partial Weighted MaxSAT.
In this variant, a weight function $\wt \colon \mathcal{C} \mapsto \real\cup\{\infty\}$ assigns a weight to every clause in the set of clauses $\mathcal{C}$ ofs a propositional formula.  
The problem is to then find a valuation $v$ that maximises $\Sigma_{C_i\in\mathcal{C}} \wt(C_i) \cdot \valFunc{v}{C_i}$.

While the MaxSAT problem and its variants can be solved using dedicated solvers, standard SMT solvers like Z3~\cite{DBLP:conf/tacas/MouraB08} are also able to handle such problems.
According to terminology derived from the theory behind such solvers, clauses $C_i$ for which $\wt(C_i) = \infty$ are termed as \emph{hard} constraints, while, clauses $C_i$ for which $\wt(C_i) < \infty$ are termed as \emph{soft} constraints.
Given a propositional formula with weights assigned to clauses, MaxSAT solvers try to find a valuation that satisfies all the hard constraints and maximizes the total weight of the soft constraints that are satisfied.

\subsection{The learning algorithm} 
By using MaxSAT solvers that possess the capability of handling Partial Weighted MaxSAT problems, we can solve a stronger version of Problem~\ref{prob:ltl-learning}.
In this version, we assume that the loss based on which we search for \LTLf{} formulas has the following form:
\[\wloss(\Traces, \formula, \wttrace) = \sum\limits_{(\trace, \class)\in\Traces} \wttrace(\trace)|\valFunc{\formula}{\trace} - \class|, \label{eq:weighted-loss}\] 
where $\wttrace$ is a function that assigns a positive real-valued weight to each $\trace$ in the sample in such a way that $\sum_{(\trace, \class)\in\Traces}\wttrace(\trace) = 1$.
Observe that by considering the function $\wttrace(\trace)=\nicefrac{1}{\abs{\Traces}}$ for all traces in the sample, we have exactly $\wloss(\Traces, \formula, \wttrace)=\loss(\Traces, \formula)$ which is used in Problem~\ref{prob:ltl-learning}.
In addition to having a solution to Problem~\ref{prob:ltl-learning}, solving the stronger version provides us with a versatile algorithm that we exploit for learning decision trees over \LTLf{} formulas in Section~\ref{sec:DT-learner}.

For solving this problem, we devise an algorithm based on ideas from the learning algorithm of Neider and Gavran for inferring \LTLf{} formulas that perfectly classify a sample.
Following their algorithm, we translate the problem of inferring \LTLf{} formulas into problems in Partial Weighted MaxSAT and then use an optimized MaxSAT solver to find a solution.
More precisely, we construct a propositional formula $\propFormulaOrig{\specDepth}$ and assign weights to its clauses in such a way that an assignment $v$ of $\propFormulaOrig{\specDepth}$ that satisfies all the hard constraints and maximizes the weight of the soft constraints, satisfies two properties:
\begin{enumerate} 
\item $\propFormulaOrig{\specDepth}$ contains sufficient information to extract an \LTLf{} formula $\formula_v$ of size $\specDepth$; and
\item the sum of weights of the soft constraints satisfied by it is equal to $1-\wloss(\Traces, \formula_{v}, \wttrace)$.
\end{enumerate}

\begin{algorithm}[t]
	\small
	\KwIn{A sample $\Traces$, $\wttrace$ function, Threshold $\thres$}
	\DontPrintSemicolon
	$n\gets 0$\;
	{
		\Repeat{Sum of weights of soft constraints $\geq 1-\thres$}
		{	$n\gets n+1$\;
			Construct formula $\propFormulaOrig{\specDepth}=\dagConst{\specDepth}\land\satisfactionConst{\specDepth}$\;
			Assign weights to soft constraints in $\propFormulaOrig{\specDepth}$:\\ \qquad$\wt(\y{\specDepth}{0}{\trace})=\wttrace(\trace)$ for$(\trace,1)\in\Traces$, and $\wt(\neg\y{\specDepth}{0}{\trace})=\wttrace(\trace)$ for $(\trace,0)\in\Traces$\;
			Find assignment $v$ using MaxSAT solver}
		}
		\Return $\formula_v$\;	
	\caption{MaxSAT-based learning algorithm for \LTLf{}
	} 
	\label{alg:max-sat-learner}
\end{algorithm}

To obtain a complete algorithm, we increase the value of $n$ (starting from 1) until we find an assignment $v$ of $\propFormulaOrig{\specDepth}$ that satisfies the hard constraints and ensures that sum of weights of the soft constraints is greater than $1-\thres$.
The termination of this algorithm is guaranteed by the existence of a trivial \LTLf{} formula with achieves a zero loss on the sample (discussed earlier in this section).

On a technical level, the formula $\propFormulaOrig{\specDepth}$ in Algorithm~\ref{alg:max-sat-learner} is the conjunction $\propFormulaOrig{\specDepth}=\dagConst{\specDepth}\land\satisfactionConst{\specDepth}$, where $\dagConst{\specDepth}$ encodes the \emph{structure} of the prospective \LTLf{} formula (of size $\specDepth$) and $\satisfactionConst{\specDepth}$ tracks the satisfaction of the prospective \LTLf{} formula with words in $\Traces$.
We now explain each of the conjuncts in further detail.

\paragraph{Structural constraints.} For designing the formula $\dagConst{\specDepth}$, we rely on a canonical syntactic representation of \LTLf{} formulas, which we refer to as \emph{syntax DAGs}.
A syntax DAG is essentially a syntax tree (i.e., the unique tree that arises from the inductive definition of an \LTLf{} formula) in which common subformulas are shared.
As a result, the number of the unique subformulas of an \LTLf{} formula coincides with the number of nodes which we term as the size of its syntax DAG.

In a syntax DAG, to uniquely identify the nodes, we assign identifiers $1,\ldots,\specDepth$ in such a way that the root node is always indicated by $\specDepth$ and every node has an identifier larger than that of its children, if it has any. An example of a syntax DAG is shown in Figure~\ref{fig:syntax-dag}).

\begin{figure}
	\centering
	\begin{tikzpicture}[-,auto, scale=1]
	\node (1) at (0, 0) {$\lor$};
	\node (2) at (-.5, -.7) {$\luntil$};
	\node (3) at (.5, -.7) {$\leventually$};
	\node (4) at (-1.0, -1.4) {$p$};
	\node (5) at (0, -1.4) {$\lglobally$};
	\node (6) at (0, -2.1) {$q$};
	\draw[->] (1) -- (2); 
	\draw[->] (1) -- (3);
	\draw[->](2) -- (4);
	\draw[->] (2) -- (5);
	\draw[->] (3) -- (5);
	\draw[->] (5) -- (6);
	\end{tikzpicture}
	\hspace{0.7cm}
	\begin{tikzpicture}
	\node (1) at (0, 0) {$6$};
	\node (2) at (-.5, -.7) {$4$};
	\node (3) at (.5, -.7) {$5$};
	\node (4) at (-1.0, -1.4) {$1$};
	\node (5) at (0, -1.4) {$3$};
	\node (6) at (0, -2.1) {$2$};
	\draw[->] (1) -- (2); 
	\draw[->] (1) -- (3);
	\draw[->](2) -- (4);
	\draw[->] (2) -- (5);
	\draw[->] (3) -- (5);
	\draw[->] (5) -- (6);
	\end{tikzpicture}
	\caption{Syntax DAG and identifiers of the formula $(p \luntil \lglobally q) \lor \leventually \lglobally q$}
	\label{fig:syntax-dag}
\end{figure}

To encode the structure of a syntax DAG using propositional logic, we introduce the following propositional variables:
$\x{i}{\lambda}$ for $i\in\{1,\cdots,\specDepth\}$ and $\lambda\in\operators$, which encode that Node~$i$ is labeled by operator $\lambda$ (includes propositional variables); and $\lt{i}{j}$ and $\rt{i}{j'}$, for $i\in\{2,\cdots,\specDepth\}$ and $j,j'\in\{1,\cdots,i-1\}$, which encode that the left and right child of Node~$i$ is Node~$j$ and Node~$j'$, respectively. 
For instance, we must have variables $\x{6}{\land}$, $\lt{6}{4}$, and $\rt{6}{5}$ to be true in order to obtain a syntax DAG where Node~$6$ is labeled with $\land$, has the left child to be Node~$4$, and the right child to be Node~$5$ (similar to the syntax DAG in Figure~\ref{fig:syntax-dag}).  

We now introduce the following constraints on the variables to ensure that they encode a valid syntax DAG:
\begin{align}
\Big[ \bigwedge_{1\leq i \leq n} \bigvee_{\lambda \in \Lambda} \x{i}{\lambda} \Big] &\land \Big[ \bigwedge_{1 \leq i \leq n} \bigwedge_{\lambda \neq \lambda' \in \Lambda} \lnot \x{i} {\lambda} \lor \lnot \x{i}{\lambda'} \Big]\label{eq:label-uniqueness}\\    
[\bigwedge\limits_{2\leq i\leq  n} \bigvee\limits_{1\leq j\leq i}\lt{i}{j}]&\wedge[\bigwedge\limits_{2\leq i\leq  n}\bigwedge\limits_{1\leq j\leq j'\leq n}\neg \lt{i}{j}\vee \neg \lt{i}{j'}]\label{eq:left-child-uniqueness}\\
[\bigwedge\limits_{2\leq i\leq  n} \bigvee\limits_{1\leq j\leq i}\rt{i}{j}]&\wedge[\bigwedge\limits_{2\leq i\leq  n}\bigwedge\limits_{1\leq j\leq j'\leq n}\neg \rt{i}{j}\vee \neg \rt{i}{j'}]\label{eq:right-left-uniqueness}\\
\bigwedge_{\substack{2 \leq i \leq n, 1 \leq j, j'< i\\ \lambda\in\{\lnext, \luntil, \neg, \lor \}}} [ \x{i}{\lambda} \land \lt{i}{j} \land \rt{i}{j'} ] &\rightarrow \Big[ \bigvee_{\lambda' \in \Lambda} \x{j}{\lambda'} \land \bigvee_{\lambda'\in\Lambda} \x{j'} {\lambda'} \Big]\label{eq:ltl-ordering}\\
\bigvee\limits_{p\in\propVariables} \x{1}{p}\label{eq:first-node-prop}
\end{align}
Formula~\ref{eq:label-uniqueness} ensures that each node of the syntax DAG has a unique label.
Similarly, Formulas~\ref{eq:left-child-uniqueness} and~\ref{eq:right-left-uniqueness} ensure that each node of a syntax DAG has a unique left and right child, respectively. 
Finally, Formula~\ref{eq:first-node-prop} ensures that Node~$1$ is labeled by a propositional variable. 
Now, the formula $\dagConst{\specDepth}$ is obtained by taking conjunction of all the constraints discussed above.

Observe that from a valuation $v$ satisfying $\dagConst{\specDepth}$ one can extract an unique syntax DAG describing an \LTLf{} formula $\formula_v$ as follows: label Node~$i$ of the syntax DAG with the unique $\lambda$ for which $v(\x{i}{j})=1$;
assign Node~$\specDepth$ to be the root node; and assign edges from a node to its children based on the values of $\lt{i}{j}$ and $\rt{i}{j}$.

\paragraph{Semantic constraints.} Towards the definition of the formula $\satisfactionConst{\specDepth}$, we define propositional formulas  $\semanticConst{\specDepth}{\trace}$ for each trace $\trace$ that tracks the valuation of the \LTLf{} formula encoded by $\dagConst{\specDepth}$ on $\trace$.  
These formulas are built using variables $\y{i}{\tau}{\trace}$, where $i\in\{1,\ldots,\specDepth\}$ and $\tau\in\{1,\ldots,\abs{\trace}-1\}$, that corresponds to the value of $\valFuncPos{\formula_i}{\trace}{\tau}$ ($\formula_i$ is the \LTLf{} formula rooted at Node~$i$).
Now, to make sure that these variables have the desired meaning, we impose the following constraints based on the semantics of the \LTLf{} operators: 
\begin{align}
\bigwedge\limits_{1\leq i \leq n}\bigwedge\limits_{p\in \propVariables}\x{i}{p}&\rightarrow\Big[\bigwedge\limits_{0\leq \tau< \abs{u}}
\begin{cases}
\y{i}{\tau}{\trace}\text{ if }p\in u[i] \\
\neg \y{i}{\tau}{\trace}\text{ if }p\not\in u[i]
\end{cases}\Big]\label{eq:prop-semantics}\\
\bigwedge\limits_{\substack{1\leq i \leq n \\ 1\leq j< i}}\x{i}{\neg}\wedge \lt{i}{j} &\rightarrow\Big[\bigwedge\limits_{\substack{0\leq \tau< \abs{u}}}\Big[\y{i}{\tau}{\trace}\leftrightarrow \neg\y{i}{\tau}{\trace}\Big]\Big]\label{eq:negation-semantics}\\
\bigwedge\limits_{\substack{1\leq i \leq n \\ 1\leq j,j'< i}}\x{i}{\vee}\wedge \lt{i}{j}\wedge \rt{i}{j'}&\rightarrow\Big[\bigwedge\limits_{\substack{0\leq \tau < \abs{u}}}\Big[\y{i}{\tau}{\trace}\leftrightarrow \y{j}{\tau}{\trace}\vee \y{j'}{ \tau}{\trace}\Big]\Big]\label{eq:disjunction-semantics}\\
\bigwedge\limits_{\substack{1\leq i \leq n \\ 1\leq j< i}}\x{i}{\lnext}\wedge \lt{i}{j} &\rightarrow\Big[\bigwedge\limits_{\substack{0\leq \tau< \abs{u}-1}}\Big[\y{i}{\tau}{\trace}\leftrightarrow \y{i+1}{\tau}{\trace}\Big]\Big]\label{eq:next-semantics}\\
\bigwedge\limits_{\substack{1\leq i \leq n \\ 1\leq j, j'< i}}\x{i}{\luntil}\wedge \lt{i}{j}\wedge \rt{i}{j'}&\rightarrow\Big[\bigwedge\limits_{\substack{0\leq \tau< \abs{u}}}\Big[\y{i}{\tau}{\trace}\leftrightarrow \bigvee\limits_{\tau\leq \tau'< \abs{u}}\Big[\y{j'}{\tau'}{\trace}\wedge\bigwedge\limits_{\tau\leq t< \tau'} \y{j}{t}{\trace}\Big]\Big]\label{eq:until-semantics}
\end{align}
The constraints are similar to the ones proposed by Neider and Gavran, except that they have been adapted to comply with the semantics of \LTLf{}.
Formula~\ref{eq:prop-semantics} implements the semantics of propositions and states that if Node~$i$ is labeled with $p\in\propVariables$, then $\y{i}{\tau}{\trace}$ is set to 1 if and only if $p\in \trace[i]$.
Formulas~\ref{eq:negation-semantics} and~\ref{eq:disjunction-semantics} implement the semantics of negation and disjunction, respectively: if Node~$i$ is labeled with $\neg$ and Node $j$ is its left child, then $\y{i}{\tau}{\trace}$ equals the negation of $\y{j}{\tau}{\trace}$; on the other hand, if Node~$i$ is labeled with $\lor$, Node $j$ is its left child, and Node $j'$ is its right child, then $\y{i}{\tau}{\trace}$ equals the disjunction of $\y{j}{\tau}{\trace}$ and $\y{j'}{\tau}{\trace}$.
Formula~\ref{eq:next-semantics} implements the semantics of the $\lnext$-operator and states that if Node~$i$ is labeled with $\lnext$ and its left child is Node~$j$, then $\y{i}{\tau}{\trace}$ equals $\y{j}{\tau+1}{\trace}$.
Finally, Formula~\ref{eq:until-semantics} implements the semantics of the $\luntil$-operator; it states that if Node~$i$ is labeled with $\luntil$, its left child is Node~$j$, and its right child is Node~$j'$, then $\y{i}{\tau}{\trace}$ is set to 1 if and only if there exists a position $\tau'$ for which $\y{j'}{\tau'}{\trace}$ is set to 1 and for all positions $t$ lying between $\tau$ and $\tau'$, $\y{j}{t}{\trace}$ is set to 1.
The formula $\semanticConst{\specDepth}{\trace}$ is the conjunction of all of the semantic constraints described above.

We now define $\satisfactionConst{\specDepth}$ to be:

\begin{align}\label{eq:consistency-constraints}
\satisfactionConst{\specDepth}  = \bigwedge\limits_{(\trace, \class)\in\Traces} \semanticConst{\specDepth}{\trace} \land
\bigwedge\limits_{(\trace,1)\in\Traces} \y{\specDepth}{0}{\trace} \land
\bigwedge\limits_{(\trace,0)\in\Traces} \neg\y{\specDepth}{0}{\trace}
\end{align}

\paragraph{Weight assignment.} For assigning weights to the clauses of $\propFormulaOrig{\specDepth}$, we first convert the formulas $\dagConst{\specDepth}$ and $\satisfactionConst{\specDepth}$ into CNF.
Towards this, we simply exploit the Tseitin transformation~\cite{Tseitin1983} which converts a formula into an equivalent formula in CNF whose size is linear in the size of the original formula.

We now assign weights to constraints starting with the hard constraints as follows: $\wt{(\dagConst{\specDepth})} = \infty, \wt{(\semanticConst{\specDepth}{\trace})} = \infty \text{ for all } (\trace, \class)\in \Traces$.
Here, $\wt{(\Phi)} = w$ is a shorthand to denote $\wt(C_i)= w$ for all clauses $C_i$ in $\Phi$. 
The constraint $\dagConst{\specDepth}$ is a hard one since, it ensures that we obtain a valid syntax DAG of an \LTLf{} formula. 
$\semanticConst{\specDepth}{\trace}$ ensures that the prospective \LTLf{} formula is evaluated on the trace $\trace$ according to the semantics of \LTLf{} and thus, also needs to be a hard constraint.

The soft constraints are the ones that enforce correct classification and we assign them weights as follows: $\wt{(\y{\specDepth}{0}{\trace})} = \wttrace(\trace) \text{ for all } (\trace, 1)\in\Traces, \text{ and }
\wt(\neg\y{\specDepth}{0}{\trace}) = \wttrace(\trace) \text{ for all } (\trace, 0)\in\Traces$. Recall that $\wttrace$ refers to the function assigning weight to the traces.

To prove the correctness of our learning algorithm, we first ensure that the formula $\propFormulaOrig{\specDepth}$ along with the weight assigned to its clauses serve our purpose.
\begin{lemma}\label{lem:maxsat-encoding-correctness}
Let $\Traces$ be a sample, $\wttrace$ the weight function, $n\in\nat\setminus\{0\}$ and $\propFormulaOrig{\specDepth}$ the formula with the associated weights as defined above. Then,
\begin{enumerate}
	\item the hard constraints are satisfiable; and 
	\item if $v$ is an assignment that satisfies the hard constraints and maximizes the sum of weight of the satisfied soft constraints, then $\formula_v$ is an \LTLf{} formula of size $\specDepth$, such that $\wloss(\Traces, \formula_v, \wttrace)\leq\wloss(\Traces, \formula, \wttrace)$ for all \LTLf{} formulas $\formula$ of size $\specDepth$.
\end{enumerate}
\end{lemma}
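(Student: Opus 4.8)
The plan is to handle the two parts separately. For part~1, I would simply exhibit a concrete model of the hard constraints $\dagConst{n} \land \bigwedge_{(\trace,\class)\in\Traces}\semanticConst{n}{\trace}$. A convenient witness is $\psi_n \coloneqq \lnext^{n-1} p$ for a fixed $p\in\propVariables$ (so $\psi_1 = p$), which has exactly the $n$ distinct subformulas $\psi_1,\dots,\psi_n$. Place Node~$k$ on $\psi_k$: label Node~$1$ with $p$; for $2\le k\le n$ label Node~$k$ with $\lnext$ with left child Node~$k-1$; and set each right child to Node~$1$ (arbitrary, but forced to be defined). A direct inspection shows this valuation of the structural variables $\x{i}{\lambda}$, $\lt{i}{j}$, $\rt{i}{j'}$ satisfies Formulas~\ref{eq:label-uniqueness}--\ref{eq:first-node-prop}, hence $\dagConst{n}$. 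Then setting $\y{i}{\tau}{\trace} \coloneqq \valFuncPos{\psi_i}{\trace}{\tau}$ for all nodes $i$, positions $\tau$ and traces $\trace\in\Traces$ satisfies each $\semanticConst{n}{\trace}$: the propositional and $\lnext$ constraints hold by the definition of $V$, while the $\neg$-, $\lor$- and $\luntil$-constraints hold vacuously because no node carries these labels. Thus the hard constraints are satisfiable.

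For part~2, the core step is a \emph{semantic faithfulness} claim: for \emph{every} assignment $v$ satisfying the hard constraints, writing $\formula_{i,v}$ for the \LTLf{} formula rooted at Node~$i$ of the syntax DAG encoded by $v$, we have $v(\y{i}{\tau}{\trace}) = \valFuncPos{\formula_{i,v}}{\trace}{\tau}$ for all $i\in\{1,\dots,n\}$, all positions $\tau$ and all $\trace\in\Traces$. I would prove this by induction on the node identifier $i$, which is well-founded since children always have strictly smaller identifiers (the variables $\lt{i}{j}$, $\rt{i}{j'}$ exist only for $j,j'<i$). If Node~$i$ carries a proposition (in particular Node~$1$, which Formula~\ref{eq:first-node-prop} forces), Formula~\ref{eq:prop-semantics} gives the claim directly; otherwise one case-splits on the unique label of Node~$i$ (unique by Formula~\ref{eq:label-uniqueness}) and applies the corresponding constraint, Formulas~\ref{eq:negation-semantics}--\ref{eq:until-semantics}, together with the induction hypothesis on the unique children of Node~$i$. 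Granting this claim, let $v$ be a maximizing assignment; the structural constraints yield a well-defined \LTLf{} formula $\formula_v$ whose syntax DAG has $n$ nodes. The soft constraints satisfied by $v$ are exactly $\y{n}{0}{\trace}$ for $(\trace,1)\in\Traces$ with $v(\y{n}{0}{\trace})=1$ and $\neg\y{n}{0}{\trace}$ for $(\trace,0)\in\Traces$ with $v(\y{n}{0}{\trace})=0$; by the faithfulness claim at $i=n$, $\tau=0$, these are precisely the pairs $(\trace,\class)\in\Traces$ with $\valFunc{\formula_v}{\trace}=\class$, so the total satisfied soft weight of $v$ equals $\sum_{(\trace,\class)\in\Traces}\wttrace(\trace)\bigl(1-|\valFunc{\formula_v}{\trace}-\class|\bigr) = 1-\wloss(\Traces,\formula_v,\wttrace)$, using $\sum_{(\trace,\class)\in\Traces}\wttrace(\trace)=1$.

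To finish, I would establish the converse direction of the encoding: for any \LTLf{} formula $\formula$ of size $n$, fix a legal node numbering of its syntax DAG, set the structural variables accordingly (satisfying $\dagConst{n}$) and set $\y{i}{\tau}{\trace}\coloneqq\valFuncPos{\formula_i}{\trace}{\tau}$ (satisfying every $\semanticConst{n}{\trace}$), obtaining an assignment $v_\formula$ with $\formula_{v_\formula}=\formula$ that satisfies the hard constraints and, by the computation above, has satisfied soft weight $1-\wloss(\Traces,\formula,\wttrace)$. Since $v$ maximizes the satisfied soft weight over all assignments satisfying the hard constraints, $1-\wloss(\Traces,\formula_v,\wttrace)\ge 1-\wloss(\Traces,\formula,\wttrace)$, i.e.\ $\wloss(\Traces,\formula_v,\wttrace)\le\wloss(\Traces,\formula,\wttrace)$. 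The main obstacle is the semantic-faithfulness induction, where the $\luntil$ case (matching Formula~\ref{eq:until-semantics} against the recursive definition of $V$) and the boundary behaviour of the $\lnext$ constraint at the last position of a trace require the most care; a lesser point is arguing that every node of the encoded DAG is reachable from the root, so that $\formula_v$ indeed has size exactly $n$, which may need a small additional structural observation.
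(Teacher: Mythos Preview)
Your proof is correct and follows the same approach as the paper: hard-constraint satisfiability via the existence of a size-$n$ \LTLf{} formula, and the identity between the satisfied soft weight and $1-\wloss(\Traces,\formula_v,\wttrace)$, from which optimality follows. You are more rigorous than the paper---it leaves the semantic-faithfulness induction implicit, and it does not spell out the converse direction (that every size-$n$ formula $\formula$ induces an assignment $v_\formula$ of the hard constraints with soft weight $1-\wloss(\Traces,\formula,\wttrace)$), even though that step is what actually licenses concluding minimality over \emph{all} size-$n$ formulas rather than merely over formulas arising from assignments.
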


\begin{proof}
	The hard constraints of $\propFormulaOrig{\specDepth}$ are $\dagConst{\specDepth}$ and $\semanticConst{\specDepth}{\trace}$. 
	Now, $\dagConst{\specDepth}$ is satisfiable since there always exists a valid \LTLf{} formula of size $\specDepth$.
	As a result, using the syntax DAG of a \LTLf{} formula of size $\specDepth$, we can find an assignment to the variables of $\dagConst{\specDepth}$ that makes it satisfiable. 
	The constraint $\semanticConst{\specDepth}{\trace}$, on the other hand, simply tracks the valuation of the prospective formula on traces $\trace$.
	One can easily find an assignment of the variables of $\semanticConst{\specDepth}{\trace}$ using the semantics of \LTLf{}.
	
	For proving the second part, let us assume that $v$ is an assignment that satisfies the hard constraints.
	We now claim that the sum of the weights of the satisfied soft constraints is equal to $1-\wloss(\Traces, \formula_v, \wttrace)$.
	If we can prove this, then if $v$ is an assignment that maximizes the weight of the satisfied soft constraints directly implies that $\formula_v$ minimizes the $\wloss$ function.
	Now towards proving the claim, we have the following:
	\begin{align*}
	\wloss({\Traces},{\formula_v}, \wttrace) &= \sum\limits_{\valFunc{\formula_v}{\trace}\neq \class} \wttrace(\trace) =  \sum\wttrace(\trace) - \sum\limits_{\valFunc{\formula_v}{\trace}= \class} \wttrace(\trace)\\
	&= 1 - \sum\limits_{\valFunc{\formula_v}{\trace}= \class} \wttrace(\trace) = 1 - \sum\limits_{v(\y{\specDepth}{0}{\trace})=\class} \wttrace(\trace)
	\end{align*}
	All the summations appearing in the above equation are over $(\trace, \class)\in\Traces$.
	Moreover, the quantity $\sum_{v(\y{\specDepth}{0}{\trace})=\class} \wttrace(\trace)$, appearing in the final line, refers to sum of the weights of the satisfied soft constraints, since
	the constraints in which $v(\y{\specDepth}{0}{\trace})=\class$ are the ones that are satisfied.
\end{proof}

The termination and the correctness of Algorithm~\ref{alg:max-sat-learner}, which is established using the following theorem, is a consequence of Lemma~\ref{lem:maxsat-encoding-correctness}.

\begin{theorem}\label{thm:approx-learning-correctness}
	Given a sample $\Traces$ and threshold $\thres\in\real$, Algorithm~\ref{alg:max-sat-learner} computes an \LTLf{} formula $\formula$ that has $\wloss(\Traces,\formula,\wttrace)\leq\thres$ and is the minimal in size among all \LTLf{} formulas that have $\wloss(\Traces, \formula, \wttrace)\leq\thres$.
\end{theorem}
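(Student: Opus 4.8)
The plan is to obtain the theorem almost directly from Lemma~\ref{lem:maxsat-encoding-correctness} by a monotonicity argument over the loop variable $n$. First I would fix notation: let $n^\ast$ be the value of $n$ at which Algorithm~\ref{alg:max-sat-learner} terminates, and let $v$ be the assignment returned by the MaxSAT solver in that final iteration, so the output is $\formula_v$. Part~(1) of Lemma~\ref{lem:maxsat-encoding-correctness} guarantees that the hard constraints of $\propFormulaOrig{n^\ast}$ are satisfiable, so the solver indeed returns such a $v$; part~(2) then gives that $\formula_v$ is an \LTLf{} formula of size $n^\ast$ with $\wloss(\Traces,\formula_v,\wttrace)\leq\wloss(\Traces,\formula,\wttrace)$ for every \LTLf{} formula $\formula$ of size $n^\ast$. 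The key bridge, already established inside the proof of that lemma, is the identity: for any assignment $w$ satisfying the hard constraints, the sum of the weights of the satisfied soft constraints equals $1-\wloss(\Traces,\formula_w,\wttrace)$. I would restate this identity explicitly and reuse it in both remaining parts of the argument.

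Next I would check that the returned formula meets the loss bound. The loop exits precisely when the sum of weights of the satisfied soft constraints is at least $1-\thres$; by the identity above this quantity equals $1-\wloss(\Traces,\formula_v,\wttrace)$, so the exit condition is equivalent to $\wloss(\Traces,\formula_v,\wttrace)\leq\thres$, which is the first claim.

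For minimality I would argue by contradiction. Suppose some \LTLf{} formula $\psi$ with $\abs{\psi}=m<n^\ast$ satisfies $\wloss(\Traces,\psi,\wttrace)\leq\thres$. Since the loop increments $n$ by one starting from $1$, the algorithm executed the iteration $n=m$ before reaching $n^\ast$; in that iteration the solver produced an assignment $v_m$ of the hard constraints of $\propFormulaOrig{m}$, and by part~(2) of the lemma $\formula_{v_m}$ minimizes $\wloss$ over all size-$m$ formulas, so $\wloss(\Traces,\formula_{v_m},\wttrace)\leq\wloss(\Traces,\psi,\wttrace)\leq\thres$. By the weight identity the sum of weights of the soft constraints satisfied by $v_m$ is $1-\wloss(\Traces,\formula_{v_m},\wttrace)\geq 1-\thres$, so the stopping criterion would already have been met at $n=m$, contradicting $n^\ast>m$. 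Hence every \LTLf{} formula with $\wloss\leq\thres$ has size at least $n^\ast=\abs{\formula_v}$, so $\formula_v$ is minimal as required.

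Finally I would dispatch termination: let $N$ be the size of the trivial \LTLf{} formula $\formula_0$ of zero loss discussed after Problem~\ref{prob:ltl-learning}, which satisfies $\valFunc{\formula_0}{\trace}=\class$ for every $(\trace,\class)\in\Traces$, hence $\wloss(\Traces,\formula_0,\wttrace)=0\leq\thres$ for any $\wttrace$. If the loop has not stopped before reaching $n=N$, part~(2) of the lemma gives $\wloss(\Traces,\formula_{v_N},\wttrace)\leq\wloss(\Traces,\formula_0,\wttrace)=0\leq\thres$, so the exit condition holds and the loop halts at some $n^\ast\leq N$. The main obstacle I anticipate is purely a matter of care rather than depth: the encoding $\propFormulaOrig{n}$ models formulas whose syntax DAG has \emph{exactly} $n$ nodes, whereas the theorem quantifies over all \LTLf{} formulas, so one must make sure the minimality argument only relies on the loop visiting every size from $1$ upward and on the set of realisable sizes being nonempty (given by $\formula_0$) — no ``padding'' lemma relating size-$n$ to size-$(n+1)$ formulas is needed. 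Everything else reduces to re-invoking the weight identity and observing that the stopping test is a faithful numerical restatement of $\wloss(\Traces,\cdot,\wttrace)\leq\thres$.
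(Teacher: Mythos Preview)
Your proposal is correct and follows essentially the same approach as the paper: termination via the trivial zero-loss formula, the loss bound via Lemma~\ref{lem:maxsat-encoding-correctness} (through the weight identity), and minimality from the fact that the loop visits sizes in increasing order. Your argument is considerably more detailed than the paper's three-line sketch, in particular making explicit the contradiction for minimality and the observation that no ``padding'' between sizes is required, but the underlying strategy is identical.
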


\begin{proof}
	The termination of Algorithm~\ref{alg:max-sat-learner} is guaranteed by the fact that there always exists an \LTLf{} formula $\varphi$ for which $\wloss(\formula,\Traces, \wttrace)=0$ as discussed in the beginning of the section~\ref{sec:max-sat-learning}.
	Second, the fact that $\formula$ has $\wloss(\formula,\Traces, \wttrace)\leq\thres$ is a consequence of Lemma~\ref{lem:maxsat-encoding-correctness} 
	Finally, the minimality of the formula is consequence of the fact that Algorithm~\ref{alg:max-sat-learner} searches for \LTLf{} formula in increasing order of size.
\end{proof}

\section{Learning \LTLf{} formulas using decision trees} 
\label{sec:DT-learner}

In this section, we first introduce decision trees over \LTLf{} formulas and then proceed to discuss how we infer them from given data using Algorithm~\ref{alg:dt-learner}.
 
\subsection{Decision Trees over \LTLf{} formulas}

A decision tree over \LTLf{} formulas is a tree-like structure where all nodes of the tree are labeled by \LTLf{} formulas.
While the leaf nodes of a decision tree are labeled by either $\ltrue$ or $\lfalse$, the inner nodes are labeled by (non-trivial) \LTLf{} formulas which represent decisions to predict the class of a trace.
Each inner node leads to two subtrees connected by edges, where the left edge is represented with a solid edge and the right edge with a dashed one. Figure~\ref{fig:DT-example} depicts a decision tree over \LTLf{} formulas.

\begin{figure}
	\centering
	\begin{tikzpicture}
	\node (1) at (0, 0) {$\formula_1$};
	\node (2) at (-.8, -.7) {$\formula_2$};
	\node (3) at (.8, -.7) {$\ltrue$};
	\node (4) at (-1.4, -1.6) {$\ltrue$};
	\node (5) at (-.2, -1.6) {$\lfalse$};
	\draw[->]        (1) -- (2);
	\draw[->,dashed] (1) -- (3);
	\draw[->]        (2) -- (4);
	\draw[->,dashed] (2) -- (5);
	\end{tikzpicture}
	\caption{A decision tree over \LTLf{} formulas}
	\label{fig:DT-example}
\end{figure}
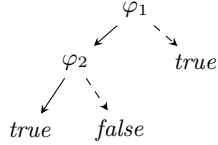

A decision tree $\DT$ over \LTLf{} formula corresponds to an \LTLf{} formula $\formula_t \coloneqq \bigvee_{\rho\in\Pi} \bigwedge_{\formula\in\rho} \formula^\prime$, where $\Pi$ is the set of paths that originate in the root node and end in a leaf node labeled by $\ltrue$ and $\formula^\prime = \formula$ if it appears before a solid edge in $\rho\in\Pi$, otherwise $\formula^\prime = \neg\formula$.

For evaluating a decision tree $\DT$ on a trace $\trace$, we use the valuation $\valFunc{\formula_\DT}{\trace}$ of the equivalent \LTLf{} formula $\formula$ on $\trace$.
We can, in fact, extend the valuation function and loss function for \LTLf{} formulas to decision trees as $\valFunc{\DT}{\trace}=\valFunc{\formula_\DT}{\trace}$ and $\loss(\Traces,\DT)=\loss(\Traces, \formula_\DT)$ respectively.

\subsection{The learning algorithm}

Our decision tree learning algorithm shares similarity with the class of decision tree learning algorithms known as Top-Down Induction of Decision Trees (TDIDT)~\cite{DBLP:journals/ml/Quinlan86}.
Popular decision tree learning algorithms such as ID3, C4.5, CART are all part of the TDIDT algorithm family.
In such algorithms, decision trees are constructed in a top-down fashion by finding suitable features (i.e., predicates over the attributes) of the data to partition it and then inductively applying the same method for the individual partitions.
Being a part of the TDIDT family, our algorithm can incorporate standard heuristics (e.g., tree pruning) to obtain a smaller derived decision tree as common in the other algorithms.

Algorithm~\ref{alg:dt-learner} outlines our approach to infer a decision tree over \LTLf{} formulas.
In our algorithm, we first check the stopping criteria (Line~\ref{line:stop-criteria}) that is responsible for the termination of the algorithm. 
If the chosen stopping criterion is met, we return a leaf node. 
We discuss the exact stopping criterion used in our algorithm in Section~\ref{subsec:stop-criteria}.

If the stopping criterion fails, we search for an ideal \LTLf{} formula $\formula$ using Algorithm~\ref{alg:max-sat-learner} for the current node of the decision tree.
Our search for $\formula$ is based on a score function and we infer the minimal one that achieves a score greater than a user-defined $\minscore$ on the sample.
The choice of the score function and parameter $\minscore$ is a crucial aspect of the algorithm, and further discussion about it is in Section~\ref{subsec:ltl-primitives}.

After having inferred formula $\formula$, next we split the sample into two sub-samples $\Traces_1$ and $\Traces_2$ with respect to $\formula$ as follows: $\Traces_1 = \{(\trace, \class)\mid \valFunc{\formula}{\trace}=1\}$, and $\Traces_2= \{(\trace, \class)\mid \valFunc{\formula}{\trace} = 0\}$.
Finally, we recursively apply the decision tree learning on each of the resulting sub-samples (Line~\ref{line:recursive-call}).
The decision tree returned is a tree with root node $\formula$ and subtrees $\DT_1$ and $\DT_2$.
\begin{algorithm}[th]
	\DontPrintSemicolon
	
	\Input{Sample $\Traces$, Minimum score value $\minscore$, Threshold $\thres$}
	\Parameter{ Stopping criteria $\stopCriteria$, Score function $\sfunc$}
	\BlankLine

	\uIf{$\stopCriteria(\Traces, \thres)$\label{line:stop-criteria}}
		{
			\Return{$\leaf(\Traces)$}
		}
	\Else{
		Infer minimal formula $\formula$ with $\sfunc(\Traces, \formula)\geq\minscore$ using Algorithm \ref{alg:max-sat-learner} \label{line:infer_form} \;
		Split $\Traces$ into $\Traces_1$, $\Traces_2$ using $\formula$ \label{line:split_sample}
		\;
		Infer trees $\DT_1$, $\DT_2$ by recursively appying algorithm to $\Traces_1$ and $\Traces_2$ \; \label{line:recursive-call}
		\Return{decision tree with root node $\formula$ and subtrees $\DT_{1}$, $\DT_{2}$} \;
		}
	\caption{Decision tree learning algorithm}
	\label{alg:dt-learner}
\end{algorithm}

\subsection{\LTLf{} formulas at each decision node}\label{subsec:ltl-primitives}
Ideally, we aim to infer \LTLf{} formulas at each decision node, that in addition to being small, also ensure that the resulting sub-samples after a split are as ``homogenous'' as possible. 
In simpler words, we would like the sub-samples obtained after a split to predominantly consist of traces of one particular class.
More homogenous splits result in early termination of the algorithm resulting in small decision trees.
To achieve this, one could simply infer a minimal \LTLf{} formula that perfectly classifies the sample.
While in principle, this solves our problem, in practice inferring an \LTLf{} formula that perfectly classifies a sample is a computationally expensive process~\cite{DBLP:conf/fmcad/NeiderG18}.
Moreover, it results in a trivial decision tree consisting of a single decision node.
Thus, to avoid that, we wish to infer concise \LTLf{} formulas that classify most traces correctly on the given sample.

To mechanize the search for concise \LTLf{} formulas for producing splits, we measure the quality of the \LTLf{} formula 
using a \emph{score} function.
In our algorithm, we infer a minimal LTL formula that achieves a score greater than a user-defined threshold $\minscore$.
This parameter regulates the tradeoff between the number of decision nodes in the tree and the size of the LTL formulas in each node.
While all TDIDT algorithms involve certain metrics (e.g., gini impurity, entropy) to measure the efficacy of a feature to perform a split,
these metrics are based on non-linear operations on the fraction of examples of each class in a sample.
However, searching \LTLf{} formulas based on such metrics cannot be handled using a MaxSAT framework.

One possible choice of score $\score(\Traces, \formula)=1-\loss(\Traces, \formula)$, which relies on the loss function (as in Definition~\ref{eq:loss}).
A formula $\formula$ with $\score(\Traces, \formula)\geq\minscore$ is a formula with $\loss(\Traces, \formula)\leq 1-\minscore$.
Thus, for inferring LTL formulas with score greater than $\minscore$, we invoke Algorithm~\ref{alg:max-sat-learner} to produce a minimal LTL formula $\formula$ with $\loss(\Traces,\formula)\leq 1-\minscore$.
Note that, for this score, one must choose the $\minscore$ to be smaller than $1-\thres$, else one would end up with a trivial decision tree with a single decision node.
Also, choosing $\minscore<0.5$, would always result in a leaf node.

While $\score$ as the metric seems to be an obvious choice, it often results in a problem which we refer to as \emph{empty splits}.
Precisely, the problem of empty splits occurs when one of the sub-samples, i.e., either $\Traces_1$ or $\Traces_2$ becomes empty.
Empty splits lead to an unbounded recursion branch of the learning algorithm, since, the best LTL formula chosen to $\score$ does not produce any meaningful splits.
This problem is more prominent in examples where the sample is skewed towards one class of examples, as has been often noticed in our experiments.
For instance, consider a sample $\Traces=\{(u,1)\}\cup \{(v_1,0), (v_2,0),\cdots (v_{99},0)\}$; for this sample if one searches for an \LTLf{} formula with $\minscore=0.9$, $\lfalse$ is a minimal formula; this formula, however, results in empty splits, since $\Traces_1=\emptyset$.

To address this problem, we use a score that relies on $\wloss$ with a weight function $\wtratio$ defined as follows:
\begin{align*}
\wtratio(\trace) = \frac{0.5}{\abs{\{(\trace, \class)| \class=1\}}}
\text{ for }(\trace, 1)\in \Traces, 
\wtratio(\trace) = \frac{0.5}{\abs{\{(\trace, \class)| \class=0\}}} 
\text{ for }(\trace, 0)\in \Traces
\end{align*}
Intuitively, the above $\wtratio$ function normalizes the weight provided to traces, based on the number of examples in its class, and reduces the imbalance in skewed samples.

Our final choice of score, based on the above $\wtratio$ function, is $\scorer(\Traces, \formula)=\mathit{max}\{\wloss(\Traces, \formula, \wtratio), 1-\wloss(\Traces, \formula, \wtratio)\}$ 
Using such a score, in addition to avoiding empty splits, we avoid always having \emph{asymmetric splits}.
We say a split is asymmetric when the fraction of positive examples in $\Traces_1$ 
is greater than or equal $0.5$ or the fraction of negative examples in $\Traces_2$ is less than or equal to $0.5$.
Choosing the score to be $1-\wloss(\Traces, \formula, \wtratio)$ always leads to asymmetric splits, since $\formula$ in order to minimize $\wloss(\Traces, \formula, \wtratio)$ attempts to satisfy several positive traces or not satisfy several negative examples.
In the decision tree learning algorithm, we are interested in homogenous splits and do not wish $\Traces_1$ (or $\Traces_2$) to predominantly have positive (or negative) traces and thus, the choice of $\scorer$. 

Now, for finding an appropriate LTL formula according to $\scorer$, we need to invoke Algorithm~\ref{alg:max-sat-learner} twice. 
Invoking Algorithm~\ref{alg:max-sat-learner} with sample $\Traces$, weight function $\wtratio$, and threshold $1-\minscore$, provides us with a formula $\formula_1$ that minimizes $\wloss(\Traces, \formula, \wtratio)$ and thus maximizes $1-\wloss(\Traces, \formula, \wtratio)$.
On the other hand, invoking Algorithm~\ref{alg:max-sat-learner} with an altered sample $\Traces_{R}$, in which labels of all traces are inverted (i.e. positives traces become negative and vice-versa), and keeping other arguments same, provides us with an formula $\formula_2$ that maximizes $\wloss(\Traces, \formula_2, \wtratio)$. 
Now, among $\formula_1$ and $\formula_2$, we choose the one that provides us with a higher score according to $\scorer$.

While any score function that avoids the problem of empty and asymmetric splits is sufficient for our learning algorithm, we have used $\scorer$ as a score function due to efficient performance using $\scorer$ in our experiments.  
Moreover, we show if we infer an \LTLf{} formula that achieves a $\scorer$ greater than $0.5$ in the algorithm, we never encounter empty splits using the following lemma.
\begin{lemma}\label{lem:scorearl-guarentee}
	Given a sample $\Traces$ and an \LTLf{} formula $\formula$, if $\scorer(\Traces, \formula)>0.5$, then there exists traces $\trace_1, \trace_2$ in $\Traces$ such that $\valFunc{\trace_1}{\formula}=1$ and $\valFunc{\trace_2}{\formula}=0$.
\end{lemma}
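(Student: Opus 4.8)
The plan is to prove the contrapositive: assuming the desired pair of traces does not exist, we show that $\scorer(\Traces,\formula)\leq 0.5$. If there is no trace $\trace_1\in\Traces$ with $\valFunc{\trace_1}{\formula}=1$, then $\valFunc{\trace}{\formula}=0$ for every $(\trace,\class)\in\Traces$; symmetrically, if there is no trace $\trace_2\in\Traces$ with $\valFunc{\trace_2}{\formula}=0$, then $\valFunc{\trace}{\formula}=1$ for every $(\trace,\class)\in\Traces$. Hence the failure of the conclusion forces $\formula$ to be evaluated uniformly (either to $0$, or to $1$) on the whole sample, and it suffices to handle these two cases.

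First I would record the key property of the weight function $\wtratio$: by construction, $\sum_{(\trace,1)\in\Traces}\wtratio(\trace)=0.5$ and $\sum_{(\trace,0)\in\Traces}\wtratio(\trace)=0.5$ (this implicitly uses that both classes are nonempty, which is already needed for $\wtratio$ to be well-defined). Now, in the case where $\valFunc{\trace}{\formula}=0$ everywhere, the quantity $\abs{\valFunc{\formula}{\trace}-\class}$ equals $1$ exactly on the positive traces and $0$ on the negative ones, so $\wloss(\Traces,\formula,\wtratio)=\sum_{(\trace,1)\in\Traces}\wtratio(\trace)=0.5$. In the case where $\valFunc{\trace}{\formula}=1$ everywhere, that same quantity equals $1$ exactly on the negative traces, so $\wloss(\Traces,\formula,\wtratio)=\sum_{(\trace,0)\in\Traces}\wtratio(\trace)=0.5$. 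Either way $\wloss(\Traces,\formula,\wtratio)=0.5$, hence $1-\wloss(\Traces,\formula,\wtratio)=0.5$, and therefore $\scorer(\Traces,\formula)=\max\{0.5,0.5\}=0.5$, contradicting $\scorer(\Traces,\formula)>0.5$.

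Since the argument is a direct unfolding of the definitions of $\scorer$, $\wloss$, and $\wtratio$, I do not expect a genuine obstacle; the only point requiring mild care is the bookkeeping that each class contributes exactly $0.5$ to the total weight, together with the tacit standing assumption that $\Traces$ contains at least one positive and at least one negative trace so that $\wtratio$ — and hence $\scorer$ — is meaningfully defined. Everything else reduces to the two short computations above.
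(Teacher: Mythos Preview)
Your proposal is correct and follows essentially the same approach as the paper: argue by contradiction (equivalently, contrapositive), observe that $\formula$ must evaluate uniformly on the sample, and then compute directly that $\wloss(\Traces,\formula,\wtratio)=0.5$ using the fact that each class carries total $\wtratio$-weight $0.5$. If anything, your version is slightly more thorough than the paper's, since you treat both uniform cases explicitly rather than appealing to ``without loss of generality'' and you make the standing assumption that both classes are nonempty explicit.
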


\begin{proof}
	Towards contradiction, without loss of generality, let us assume that for all $\trace$ in $\Traces$ and formula $\formula$ with $\scorer(\Traces, \formula)>0.5$, we have $\valFunc{\trace}{\formula}=1$.
	In such a case, $\abs{\valFunc{\trace}{\varphi}-\class}=0$ for $(\trace, 1)\in\Traces$ and $\abs{\valFunc{\trace}{\varphi}-\class}=1$ for $(\trace, 0)\in\Traces$.
	We can, thus, calculate that $\sum_{(u,1)\in\Traces}\abs{\valFunc{\trace}{\varphi}-\class}=0$,  $\sum_{(u,0)\in\Traces}\abs{\valFunc{\trace}{\varphi}-\class}=\abs{\{(\trace, 0)\in\Traces|\class=0\}}$, and consequently $\scorer(\Traces, \formula)=0.5$, violating our assumption.
\end{proof}
 
\subsection{Stopping Criteria}\label{subsec:stop-criteria}
The stopping criteria is essential for the termination of the algorithm.
Towards the definition of the stopping criteria, we define the following two quantities:
\[p_1(\Traces)=\frac{\abs{\{(\trace,\class)\mid \class=1\}}
}{\abs{\Traces}} \text{ and }p_2(\Traces)=\frac{\abs{\{(\trace,\class)\mid \class=0\}}
}{\abs{\Traces}}\]
The stopping criteria can now be defined as the follows:
\begin{align*}
\stopCriteria(\Traces)= \begin{cases}
\ltrue \text{ if } p_1(\Traces)\leq\thres \text{ or } p_2(\Traces)\leq\thres\\
\lfalse \text{ otherwise }
\end{cases}
\end{align*}
Intuitively, the above stopping criteria enforces that the algorithm terminates when the fraction of positive examples or the fraction of negative examples in a resulting sample is less or equal to $\thres$. 
Now, when the stopping criteria holds, the algorithm halts and returns a leaf node labeled by $\leaf(\Traces)$ where $\leaf$ is defined as $\leaf(\Traces)=\lfalse$ if $p_1\leq\thres$ and $\ltrue$ if $p_2\leq \thres$.

The following theorem ensures that Algorithm~\ref{alg:dt-learner} terminates and produces the correct output when score function $\scorer$ and the stopping criteria $\stopCriteria$ defined above is used as parameters.

\begin{theorem}\label{thm:DT-algo-correctness} 
	Given sample $\Traces$ and threshold $\thres\in\real$, Algorithm~\ref{alg:dt-learner} terminates and returns a decision tree over \LTLf{} formula $\DT$ such that $\loss(\Traces, \DT)\leq \thres$.
\end{theorem}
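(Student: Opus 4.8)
The plan is to establish the two claims of the theorem separately: termination of Algorithm~\ref{alg:dt-learner}, and the loss bound $\loss(\Traces, \DT) \leq \thres$ on the returned tree. For termination, the key is a well-founded measure on the recursive calls. I would argue that whenever the stopping criterion $\stopCriteria(\Traces)$ fails, we have $p_1(\Traces) > \thres$ and $p_2(\Traces) > \thres$, hence $\thres < 1/2$ and both classes are non-empty in $\Traces$; moreover $\abs{\Traces} \geq 2$. By Lemma~\ref{lem:scorearl-guarentee}, the formula $\formula$ inferred in Line~\ref{line:infer_form} — which achieves $\scorer(\Traces, \formula) \geq \minscore > 0.5$ — produces a split with both $\Traces_1$ and $\Traces_2$ non-empty (this is exactly where I need $\minscore > 0.5$ as a standing assumption on the parameter; I would state this requirement explicitly, mirroring the discussion preceding Lemma~\ref{lem:scorearl-guarentee}). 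Since $\Traces = \Traces_1 \uplus \Traces_2$ with both parts non-empty, $\abs{\Traces_1} < \abs{\Traces}$ and $\abs{\Traces_2} < \abs{\Traces}$, so the recursion is on strictly smaller samples and terminates. One also needs that Line~\ref{line:infer_form} itself terminates: this follows from Theorem~\ref{thm:approx-learning-correctness}, since inferring a minimal formula with $\scorer(\Traces,\formula) \geq \minscore$ reduces (as described in Section~\ref{subsec:ltl-primitives}) to two terminating invocations of Algorithm~\ref{alg:max-sat-learner} with threshold $1 - \minscore$.

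For the loss bound, I would proceed by structural induction on the returned decision tree $\DT$. In the base case, the algorithm returns a leaf $\leaf(\Traces)$ because $\stopCriteria(\Traces)$ holds, i.e. $p_1(\Traces) \leq \thres$ or $p_2(\Traces) \leq \thres$. If $p_1(\Traces) \leq \thres$ then $\leaf(\Traces) = \lfalse$, and the misclassified traces are exactly the positive ones, so $\loss(\Traces, \lfalse) = p_1(\Traces) \leq \thres$; symmetrically, if $p_2(\Traces) \leq \thres$ then $\leaf(\Traces) = \ltrue$ and $\loss(\Traces, \ltrue) = p_2(\Traces) \leq \thres$. For the inductive step, the returned tree has root formula $\formula$ and subtrees $\DT_1, \DT_2$ obtained recursively on $\Traces_1 = \{(\trace,\class) \in \Traces \mid \valFunc{\formula}{\trace} = 1\}$ and $\Traces_2 = \{(\trace,\class) \in \Traces \mid \valFunc{\formula}{\trace} = 0\}$. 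The crucial observation is that for a trace with $\valFunc{\formula}{\trace} = 1$, evaluating $\DT$ on $\trace$ routes it into the left subtree, so $\valFunc{\DT}{\trace} = \valFunc{\DT_1}{\trace}$; similarly $\valFunc{\DT}{\trace} = \valFunc{\DT_2}{\trace}$ when $\valFunc{\formula}{\trace} = 0$. This would be justified from the definition $\formula_\DT = \bigvee_{\rho \in \Pi} \bigwedge_{\formula' \in \rho} \formula'$ by noting that the disjuncts through the left subtree all carry the conjunct $\formula$ and those through the right subtree all carry $\neg\formula$, so exactly one side is "active" on any given trace. Hence the misclassified traces of $\DT$ on $\Traces$ partition into those misclassified by $\DT_1$ on $\Traces_1$ and those misclassified by $\DT_2$ on $\Traces_2$, giving
\[
\loss(\Traces, \DT) \;=\; \frac{\abs{\Traces_1}}{\abs{\Traces}}\,\loss(\Traces_1, \DT_1) \;+\; \frac{\abs{\Traces_2}}{\abs{\Traces}}\,\loss(\Traces_2, \DT_2) \;\leq\; \frac{\abs{\Traces_1}}{\abs{\Traces}}\thres + \frac{\abs{\Traces_2}}{\abs{\Traces}}\thres \;=\; \thres,
\]
using the induction hypothesis on each subtree and $\abs{\Traces_1} + \abs{\Traces_2} = \abs{\Traces}$.

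The main obstacle I anticipate is the bookkeeping around the decision-tree semantics — making the "routing" claim $\valFunc{\DT}{\trace} = \valFunc{\DT_i}{\trace}$ precise from the paths-to-true-leaves definition of $\formula_\DT$, and cleanly justifying the additive decomposition of the loss across the split. Everything else is routine: the termination measure is immediate once Lemma~\ref{lem:scorearl-guarentee} rules out empty splits, and the base case is a direct unfolding of $\leaf$ and $\stopCriteria$. I would also flag, as a small point needing care, the standing assumption $0.5 < \minscore < 1 - \thres$ on the parameters, which is implicit in the surrounding discussion and is exactly what both Lemma~\ref{lem:scorearl-guarentee} (for termination) and the stopping criterion (for the base case bound) require.
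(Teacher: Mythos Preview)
Your proposal is correct and follows essentially the same approach as the paper: termination via Lemma~\ref{lem:scorearl-guarentee} ruling out empty splits (together with the strictly decreasing sample size), and the loss bound via structural induction on the returned tree. Your treatment is in fact more explicit than the paper's --- you spell out the base case and give the convex-combination decomposition $\loss(\Traces,\DT) = \tfrac{\abs{\Traces_1}}{\abs{\Traces}}\loss(\Traces_1,\DT_1) + \tfrac{\abs{\Traces_2}}{\abs{\Traces}}\loss(\Traces_2,\DT_2)$, whereas the paper simply asserts the final inequality $\loss(\Traces_1\uplus\Traces_2,(\formula\land\formula_{\DT_1})\lor(\neg\formula\land\formula_{\DT_2}))\leq\thres$ without that intermediate step.
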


\begin{proof}
	First, observe that at each decision node, we can always infer an \LTL{} formula $\formula$ for which $\scorer(\Traces, \formula)\geq\minscore$, for any value of $\minscore$.
	This is because there always exists an LTL formula $\formula$ that produces perfect classification, and for this, $\scorer(\Traces,\formula)=1$.
	Second, observe that whenever a split is made during the learning algorithm, sub-samples $\Traces_1$ and $\Traces_2$ are both non-empty due to Lemma~\ref{lem:scorearl-guarentee}.
	This implies that the algorithm terminates since, a sample can be only split finitely many times.
	Now, for ensuring the decision tree $\DT$ achieves a $\loss(\Traces, \DT)\leq \thres$, we use induction over the structure of the decision tree. 
	If $\DT$ is leaf node $\ltrue$ or $\lfalse$, then $\loss(\Traces, \DT)\leq \thres$ using the stopping criteria.
	Now, say that $\DT$ is a decision tree with root $\formula$ and subtrees $\DT_1$ and $\DT_2$, meaning $\formula_\DT = (\formula\land\formula_{\DT_1})\lor(\neg\formula\land\formula_{\DT_2})$.
	Also, say that the sub-samples produced by $\formula$ are $\Traces_1$ and $\Traces_2$.
	By induction hypothesis, we can say that $\loss(\Traces_1, \DT_1)\leq\thres$ and $\loss(\Traces_2, \DT_2)\leq\thres$.
	Now, it is easy to observe that $\loss(\Traces_1, (\formula\land\formula_{\DT_1}))\leq\thres$ and $\loss(\Traces_2, (\neg\formula\land\formula_{\DT_2}))\leq\thres$, since $\formula$ satisfies all traces in $\Traces_1$ and $\neg\formula$ does not satisfy any trace in $\Traces_2$.
	We, thus, have $\loss(\Traces,\DT)=\loss(\Traces_1\uplus\Traces_2,(\formula\land\formula_{\DT_1})\lor(\neg\formula\land\formula_{\DT_2}))\leq\thres$
\end{proof}

\section{Experimental Evaluation}
\label{sec:evaluation}

\newcommand{\MaxSAT}{\emph{MaxSAT-flie}}
\newcommand{\longMaxSAT}{MaxSAT-based algorithm (Algorithm \ref{alg:max-sat-learner})}

\newcommand{\MaxSATDT}{\emph{MaxSAT-DT}}
\newcommand{\longMaxSATDT}{decision tree learning algorithm (Algorithm \ref{alg:dt-learner})}

\newcommand{\SAT}{\emph{SAT-flie}}
\newcommand{\longSAT}{SAT-based learning algorithms introduced by Neider and Gavran (Algorithm 1 from~\cite{DBLP:conf/fmcad/NeiderG18})}

\newcommand{\SATDT}{\emph{SAT-DT}}
\newcommand{\longSATDT}{decision tree based learning algorithm introduced by Neider and Gavran (Algorithm 2 from~\cite{DBLP:conf/fmcad/NeiderG18})}

\colorlet{colorThres0}{red!90!black}
\colorlet{colorThres5}{blue}
\colorlet{colorThres10}{green!80!purple}
\colorlet{colorMin80}{black}
\colorlet{colorMin60}{black}

\begin{table}[h]
\centering
\caption{Common \LTL{} patterns used in practice \cite{10.1145/298595.298598}}
\label{tab:LTL-patterns}
\begin{tabular}{ccc}
\hline
Absence &
Existence &
Universality
\\ \hline
$\lglobally( \lnot p_0)$ & 
$\leventually( p_0 )$ & 
$\lglobally( p_0 )$ 
\\
$\leventually(p_1) \limplies ( \lnot p_0 \luntil p_1 )$ & 
$\lglobally( \lnot p_0) \lor \leventually( p_0 \land \leventually( p_1 ) )$ & 
$\leventually( p_1 ) \limplies ( p_0 \luntil p_1 )$ 
\\
$\lglobally( p_1 \limplies \lglobally( \lnot p_0 ) )$ & 
$\lglobally( p_0 \land ( \lnot p_1 \limplies ( \lnot p_1 \luntil ( p_2 \land \lnot p_1 ) ) ) )$ & 
$\lglobally( p_1 \limplies \lglobally( p_0 ) )$ 
\\ \hline
\multicolumn{3}{c}{
Disjunction of common patterns
}
\\ \hline
\multicolumn{3}{c}{
$\lglobally( \lnot p_0) \lor \leventually( p_0 \land \leventually( p_1 ) ) \lor
 \lglobally( \lnot p_3) \lor \leventually( p_2 \land \leventually( p_3 ) )$ 
}
\\
\multicolumn{3}{c}{
$\leventually( p_2 ) \lor \leventually( p_0 ) \lor \leventually( p_1 )$ 
}
\\
\multicolumn{3}{c}{
$\lglobally( p_0 \land ( \lnot p_1 \limplies ( \lnot p_1 \luntil ( p_2 \land \lnot p_1 ) ) ) ) \lor
 \lglobally( p_3 \land ( \lnot p_4 \limplies ( \lnot p_4 \luntil ( p_5 \land \lnot p_4 ) ) ) )$ 
}
\\ \hline
\end{tabular}
\end{table}

In this section, we aim to evaluate the performance of our proposed algorithms
and compare them to the SAT-based learning algorithms by Neider and Gavran~\cite{DBLP:conf/fmcad/NeiderG18}.
We compare the following four algorithms:

\SAT{}: the \longSAT{},
\MaxSAT{}: our \longMaxSAT{},
\SATDT{}: the \longSATDT{}\footnote{We adapted \SATDT{} algorithm to have a stopping criterion similar to the one used in Section~\ref{subsec:stop-criteria}.} and
\MaxSATDT{}: our \longMaxSATDT{}.

We implement all learning algorithms in a Python tool\footnote{\url{https://github.com/cryhot/samples2LTL}}
using Microsoft Z3 \cite{DBLP:conf/tacas/MouraB08}.
All experiments were conducted on a Debian machine with Intel Xeon E7-8857 CPU at 3GHz using upto 6GB of RAM.

We generate samples based on common \LTL{} patterns \cite{10.1145/298595.298598} that we adapted to \LTLf{},
presented in Table \ref{tab:LTL-patterns}.
In a first benchmark (without noise), we generate 148 samples
with the generation method proposed by Neider and Gavran~\cite{DBLP:conf/fmcad/NeiderG18}.
The size of the generated samples ranges between 12 and 1000, consisting of traces of length up to 15.
Furthermore, we derive a second benchmark from the first one, by introducing 5\% noise:
for each sample of the benchmark,
we invert the labels of up to $5\%$ of the traces, randomly.

\begin{table}[t]
\caption{Summary of all the tested algorithms}
\label{tab:algo1-summary}
\resizebox{\linewidth}{!}{%
\begin{tabular}{|c|ccc|ccc|}
\hline
Algorithm &
\multicolumn{3}{c|}{benchmark without noise} &
\multicolumn{3}{c|}{benchmark with 5\% noise}
\\
&
\makecell{Number of \\ timeouts} &
\makecell{Avg. running \\ time in $s$} &
\makecell{Avg. inferred \\ formula size} &
\makecell{Number of \\ timeouts} &
\makecell{Avg. running \\ time in $s$} &
\makecell{Avg. inferred \\ formula size}
\\ \hline
\SAT{} &
 36/148 & 293.31 &  3.76 &
124/148 & 780.51 &  5.96
\\ \hline
\MaxSAT{}($\thres=0.001$) &
 47/148 & 357.26 &  3.47 &
130/148 & 801.03 &  4.89
\\
\MaxSAT{}($\thres=0.05$) &
 27/148 & 218.46 &  2.86 &
 87/148 & 548.65 &  2.95
\\
\MaxSAT{}($\thres=0.1$) &
 26/148 & 211.81 &  2.59 &
 40/148 & 275.97 &  2.54
\\ \hline
\SATDT{}($\thres=0.05$)&
 51/148 & 342.35 &  5.92 &
127/148 & 786.16 &  9.62
\\ \hline
\MaxSATDT{}($\thres=0.05, \minscore=0.8$) &
 23/148 & 174.58 &  6.77 &
 85/148 & 543.50 &  7.05
\\
\MaxSATDT{}($\thres=0.05, \minscore=0.6$) &
  7/148 &  74.97 & 30.91 &
 38/148 & 281.60 & 56.55
\\ \hline
\end{tabular}}
\end{table}

We evaluate the performance of all the algorithms
on the two benchmarks previously defined.
We set a timeout of $900 s$ on each run.
Table \ref{tab:algo1-summary} presents the parameters of the algorithms,
as well as their respective performances.

\begin{figure*}[t]
\centering
\begin{subfigure}[t]{.45\linewidth}
\centering

            \begin{tikzpicture}
            \begin{loglogaxis}
            [width = \linewidth,
height = \linewidth,
xmin = 1e0,
ymin = 1e0,
extra x ticks = {5000},
extra y ticks = {5000},
extra x tick labels = {\strut s},
extra y tick labels = {\strut s},
title = {Running time in $s$},
title style = {font=\footnotesize, inner sep=0pt},
xlabel = {\SAT {}},
x label style = {yshift=1mm},
xlabel near ticks,
ylabel = {\MaxSAT {}($\thres $)},
ylabel near ticks,
legend entries = {$\thres =0.00$,$\thres =0.05$,$\thres =0.10$},
legend style = {font=\scriptsize}]

                \addplot[only marks,
mark = o,
color = {colorThres0},
x filter/.code = {\edef\tempa{\thisrow{benchmark_noise}}\edef\tempb{0.0}\ifx\tempa\tempb\else\def\pgfmathresult{}\fi}
                ] table [x = {runtime_SAT},
y = {runtime_MaxSAT0},
col sep = comma
                ] {joinedByTraces.csv};

                \addplot[only marks,
mark = x,
color = {colorThres5},
x filter/.code = {\edef\tempa{\thisrow{benchmark_noise}}\edef\tempb{0.0}\ifx\tempa\tempb\else\def\pgfmathresult{}\fi}
                ] table [x = {runtime_SAT},
y = {runtime_MaxSAT5},
col sep = comma
                ] {joinedByTraces.csv};

                \addplot[only marks,
mark = asterisk,
color = {colorThres10},
x filter/.code = {\edef\tempa{\thisrow{benchmark_noise}}\edef\tempb{0.0}\ifx\tempa\tempb\else\def\pgfmathresult{}\fi}
                ] table [x = {runtime_SAT},
y = {runtime_MaxSAT10},
col sep = comma
                ] {joinedByTraces.csv};

            \draw[black!25,dashed] (rel axis cs:0, 0) -- (rel axis cs:1, 1);

            \end{loglogaxis}
            \end{tikzpicture}

        
\vspace{-3mm}
\caption{benchmark without noise}
\label{fig:sat-vs-maxsat:noise0:time}
\end{subfigure}%
\begin{subfigure}[t]{.45\linewidth}
\centering

            \begin{tikzpicture}
            \begin{loglogaxis}
            [width = \linewidth,
height = \linewidth,
xmin = 1e0,
ymin = 1e0,
extra x ticks = {5000},
extra y ticks = {5000},
extra x tick labels = {\strut s},
extra y tick labels = {\strut s},
title = {Running time in $s$},
title style = {font=\footnotesize, inner sep=0pt},
xlabel = {\SAT {}},
x label style = {yshift=1mm},
xlabel near ticks,
ylabel = {\MaxSAT {}($\thres $)},
ylabel near ticks,
legend entries = {$\thres =0.00$,$\thres =0.05$,$\thres =0.10$},
legend style = {font=\scriptsize}]

                \addplot[only marks,
mark = o,
color = {colorThres0},
x filter/.code = {\edef\tempa{\thisrow{benchmark_noise}}\edef\tempb{0.05}\ifx\tempa\tempb\else\def\pgfmathresult{}\fi}
                ] table [x = {runtime_SAT},
y = {runtime_MaxSAT0},
col sep = comma
                ] {joinedByTraces.csv};

                \addplot[only marks,
mark = x,
color = {colorThres5},
x filter/.code = {\edef\tempa{\thisrow{benchmark_noise}}\edef\tempb{0.05}\ifx\tempa\tempb\else\def\pgfmathresult{}\fi}
                ] table [x = {runtime_SAT},
y = {runtime_MaxSAT5},
col sep = comma
                ] {joinedByTraces.csv};

                \addplot[only marks,
mark = asterisk,
color = {colorThres10},
x filter/.code = {\edef\tempa{\thisrow{benchmark_noise}}\edef\tempb{0.05}\ifx\tempa\tempb\else\def\pgfmathresult{}\fi}
                ] table [x = {runtime_SAT},
y = {runtime_MaxSAT10},
col sep = comma
                ] {joinedByTraces.csv};

            \draw[black!25,dashed] (rel axis cs:0, 0) -- (rel axis cs:1, 1);

            \end{loglogaxis}
            \end{tikzpicture}

        
\vspace{-3mm}
\caption{benchmark with 5\% noise}
\label{fig:sat-vs-maxsat:noise5:time}
\end{subfigure}
\vspace{-2mm}
\caption{Running time comparison of \SAT{} and \MaxSAT{}}
\label{fig:sat-vs-maxsat:time}
\end{figure*}

\begin{figure}[t]
\centering
\begin{subfigure}{.5\linewidth}
\centering
\begin{tikzpicture}
\begin{axis}
[
	width=\linewidth, height=.5\linewidth,
	title = {benchmark without noise},
	xlabel = {Running time ratio},
	xmin=-3.5, xmax=1.5,
	xtick={-3,...,1}, xticklabels={$10^{-3}$,$10^{-2}$,$10^{-1}$,$10^{0}$,$10^{1}$},
	ytick={1,2,3}, y dir=reverse, yticklabels={$\thres=0.00$,$\thres=0.05$,$\thres=0.10$},
	title style = {font=\footnotesize, inner sep=0pt},
	x label style = {yshift=1mm},
]

\addplot+[
	boxplot,
	mark=o,
	color={colorThres0},
] table [
	y expr = ln( \thisrow{Time on perfect (maxSAT0)} / \thisrow{Time on perfect (SAT)} )/ln(10),
] {SATvsMaxSAT.csv};

\addplot+[
	boxplot,
	mark=o,
	color={colorThres5},
] table [
	y expr = ln( \thisrow{Time on perfect (maxSAT5)} / \thisrow{Time on perfect (SAT)} )/ln(10),
] {SATvsMaxSAT.csv}
;

\addplot+[
	boxplot,
	mark=o,
	color={colorThres10},
] table [
	y expr = ln( \thisrow{Time on perfect (maxSAT10)} / \thisrow{Time on perfect (SAT)} )/ln(10),
] {SATvsMaxSAT.csv}
;

\end{axis}
\end{tikzpicture}
\label{fig:sat-vs-maxsat:time-boxplot:noise0}
\end{subfigure}%
\begin{subfigure}{.5\linewidth}
\centering
\begin{tikzpicture}
\begin{axis}
[
	width=\linewidth, height=.5\linewidth,
	title = {benchmark with 5\% noise},,
	xlabel = {Running time ratio},
	xmin=-3.5, xmax=1.5,
	xtick={-3,...,1}, xticklabels={$10^{-3}$,$10^{-2}$,$10^{-1}$,$10^{0}$,$10^{1}$},
	ytick={1,2,3}, y dir=reverse, yticklabels={$\thres=0.00$,$\thres=0.05$,$\thres=0.10$},
	title style = {font=\footnotesize, inner sep=0pt},
	x label style = {yshift=1mm},
]

\addplot+[
	boxplot,
	mark=o,
	color={colorThres0},
] table [
	y expr = ln( \thisrow{Time (maxSAT0)} / \thisrow{Time (SAT)} )/ln(10),
] {SATvsMaxSAT.csv}
;

\addplot+[
	boxplot,
	mark=o,
	color={colorThres5},
] table [
	y expr = ln( \thisrow{Time (maxSAT5)} / \thisrow{Time (SAT)} )/ln(10),
] {SATvsMaxSAT.csv}
;

\addplot+[
	boxplot,
	mark=o,
	color={colorThres10},
] table [
	y expr = ln( \thisrow{Time (maxSAT10)} / \thisrow{Time (SAT)} )/ln(10),
] {SATvsMaxSAT.csv}
;

\end{axis}
\end{tikzpicture}
\label{fig:sat-vs-maxsat:time-boxplot:noise5}
\end{subfigure}
\vspace{-3mm}
\caption{
	Comparison of the ratio of the running time of \MaxSAT{}($\thres$) over the running time of \SAT{} for all samples in the benchmarks.
}
\label{fig:sat-vs-maxsat:time-boxplot}
\end{figure}
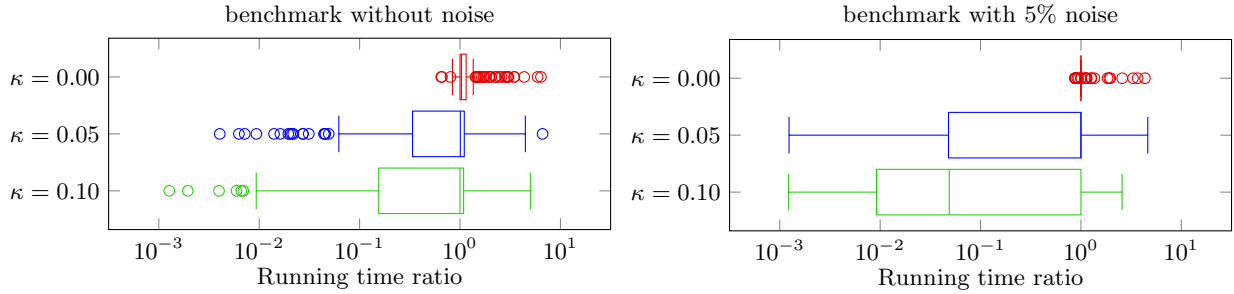

\begin{figure*}[t]
\centering

\begin{subfigure}{.37\linewidth}
\centering

            \begin{tikzpicture}
            \begin{axis}
            [
                width = \linewidth,
height = \linewidth,
xmin = 1,
ymin = 1,
xmax = 8,
ymax = 8,
xtick = {1,...,8},
ytick = {1,...,8},
xticklabels = {1,2,3,4,5,6,7,$\emptyset$},
yticklabels = {1,2,3,4,5,6,7,$\emptyset$},
enlarge x limits,
enlarge y limits,
title = {Inferred \LTLf {} formula size},
title style = {font=\footnotesize, inner sep=0pt},
xlabel = {\SAT {}},
xlabel near ticks,
ylabel = {\MaxSAT {}($\thres =0.10$)},
ylabel near ticks
            ]

            \addplot[scatter=true,
            	only marks,
            	visualization depends on = {sqrt(\thisrow{Freq-perf3}/148)*0.085*\linewidth \as \perpointmarksize},
            	point meta={TeX code symbolic={
            		\ifthenelse{\( \thisrow{SAT-perf3}>7 \OR \thisrow{MaxSAT10-perf}>7 \)}
            		{\edef\pgfplotspointmeta{timeout}}
            		{\edef\pgfplotspointmeta{normal}}
            	}},
            	scatter/classes={
            		normal={fill=gray,/tikz/mark size=\perpointmarksize},
            		timeout={fill=red,/tikz/mark size=\perpointmarksize}
            	},
            ] table [
            	x={SAT-perf3},
            	y={MaxSAT10-perf},
                col sep=comma,
            ] {SizeFreq-MaxSATvsSAT-timeouts.csv};

            \draw[black!25,dashed] (rel axis cs:0, 0) -- (rel axis cs:1, 1);

            \end{axis}
            \end{tikzpicture}

        
\vspace{-3mm}
\caption{benchmark without noise}
\label{fig:sat-vs-maxsat:noise0:size}
\end{subfigure}%
\begin{subfigure}{.37\linewidth}
\centering

            \begin{tikzpicture}
            \begin{axis}
            [
                width = \linewidth,
height = \linewidth,
xmin = 1,
ymin = 1,
xmax = 8,
ymax = 8,
xtick = {1,...,8},
ytick = {1,...,8},
xticklabels = {1,2,3,4,5,6,7,$\emptyset$},
yticklabels = {1,2,3,4,5,6,7,$\emptyset$},
enlarge x limits,
enlarge y limits,
title = {Inferred \LTLf {} formula size},
title style = {font=\footnotesize, inner sep=0pt},
xlabel = {\SAT {}},
xlabel near ticks,
ylabel = {\MaxSAT {}($\thres =0.10$)},
ylabel near ticks
            ]

            \addplot[scatter=true,
            	only marks,
            	visualization depends on = {sqrt(\thisrow{Freq-noisy3}/148)*0.085*\linewidth \as \perpointmarksize},
            	point meta={TeX code symbolic={
            		\ifthenelse{\( \thisrow{SAT-noisy3}>7 \OR \thisrow{MaxSAT10-noisy}>7 \)}
            		{\edef\pgfplotspointmeta{timeout}}
            		{\edef\pgfplotspointmeta{normal}}
            	}},
            	scatter/classes={
            		normal={fill=gray,/tikz/mark size=\perpointmarksize},
            		timeout={fill=red,/tikz/mark size=\perpointmarksize}
            	},
            ] table [
            	x={SAT-noisy3},
            	y={MaxSAT10-noisy},
                col sep=comma,
            ] {SizeFreq-MaxSATvsSAT-timeouts.csv};

            \draw[black!25,dashed] (rel axis cs:0, 0) -- (rel axis cs:1, 1);

            \end{axis}
            \end{tikzpicture}

        
\vspace{-3mm}
\caption{benchmark with 5\% noise}
\label{fig:sat-vs-maxsat:noise5:size}
\end{subfigure}
\vspace{-2mm}
\caption{
	Inferred \LTLf{} formula size comparison of \SAT{} and \MaxSAT{} with threshold $\thres=0.10$ on all samples.
	The surface of a bubble is proportional to the number of samples it represents.
	The timed out instances are represented by $\emptyset$.
}
\label{fig:sat-vs-maxsat:size}
\end{figure*}
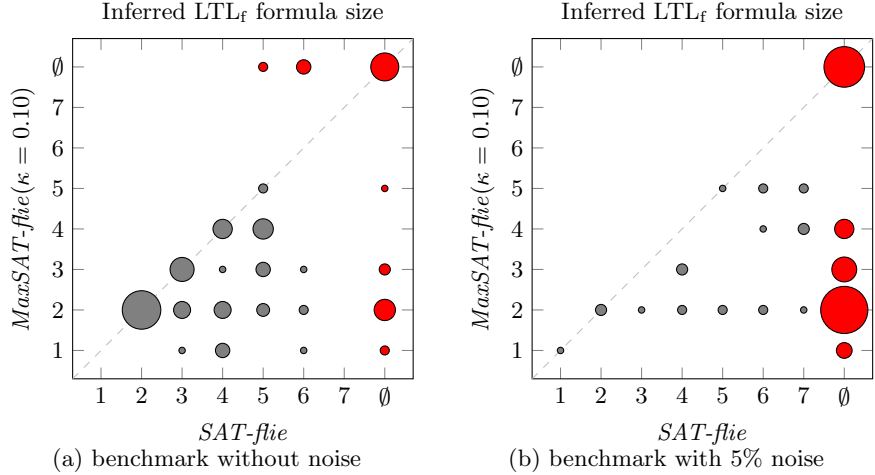

We first compare \MaxSAT{} (proposed in this paper) and \SAT{} (proposed in \cite{DBLP:conf/fmcad/NeiderG18}).
Figure \ref{fig:sat-vs-maxsat:time} presents a comparison of the running time of these two algorithms,
on each sample of the benchmark.
With $\thres=0.001$, \MaxSAT{} performs worse than \SAT{}.
This is largely due to the fact that SAT solvers are specifically designed to handle this type of problem.
For greater values of $\thres$, \MaxSAT{} performs better than \SAT{},
especially on the benchmark with noise (Figure \ref{fig:sat-vs-maxsat:noise5:time}).
To affirm this claim,
we calculate the ratio of the running times of \MaxSAT{} and \SAT{}
for each sample of the benchmarks (Figure \ref{fig:sat-vs-maxsat:time-boxplot}).
For example, given a sample $\Traces$,
this ratio would be the running time of \MaxSAT{} on $\Traces$
divided by the running time of \SAT{} on $\Traces$.

We evaluate the size of the inferred \LTLf{} formula by \MaxSAT{} and \SAT{}
on each sample of the benchmark in Figure \ref{fig:sat-vs-maxsat:size}.
The size of the formula inferred by \MaxSAT{} will by design be less than or equal to
the size of the formula inferred by \SAT{}.
As the running time of both algorithms grows exponentially with the number of iterations,
it is lower for \MaxSAT{} when the inferred formula size is strictly lower than the size of the formula inferred by \SAT{}.
However, when both inferred formulas have the same size,
there is no running time gain,
hence the median running time often being equal to 1 in Figure \ref{fig:sat-vs-maxsat:time-boxplot}.

\begin{figure}[t]
\centering
\begin{subfigure}{.5\linewidth}
\centering
\begin{tikzpicture}
\begin{axis}
[
	width=\linewidth, height=.45\linewidth,
	title = {All benchmarks},
	xlabel = {Running time ratio},
	ylabel = {$\minscore$},
	xmin=-4.25, xmax=2.75,
	xtick={-4,...,2}, xticklabels={$10^{-4}$,$10^{-3}$,$10^{-2}$,$10^{-1}$,$10^{0}$,$10^{1}$,$10^{2}$},
	ytick={1,2}, y dir=reverse, yticklabels={$0.8$,$0.6$},
	title style = {font=\footnotesize, inner sep=0pt},
	x label style = {yshift=1mm},
]

\addplot+[
	boxplot,
	mark=o,
	color={colorMin80},
] table [
	y expr = ln( \thisrow{runtime_MaxSATDT80} / \thisrow{runtime_MaxSAT5} )/ln(10),
] {joinedByTraces-sucDT80.csv}
;

\addplot+[
	boxplot,
	mark=o,
	color={colorMin60},
] table [
	y expr = ln( \thisrow{runtime_MaxSATDT60} / \thisrow{runtime_MaxSAT5} )/ln(10),
] {joinedByTraces-sucDT60.csv}
;

\end{axis}
\end{tikzpicture}
\label{fig:maxsat-vs-maxsat-dt:time-boxplot:noise0}
\end{subfigure}%
\begin{subfigure}{.5\linewidth}
\centering
\begin{tikzpicture}
\begin{axis}
[
	width=\linewidth, height=.45\linewidth,
	title = {All benchmarks},
	xlabel = {Inferred \LTLf{} formula size ratio},
	ylabel = {$\minscore$},
	xmin=-0.75, xmax=2.25,
	xtick={0,...,2}, xticklabels={$10^{0}$,$10^{1}$,$10^{2}$},
	ytick={1,2}, y dir=reverse, yticklabels={$0.8$,$0.6$},
	title style = {font=\footnotesize, inner sep=0pt},
	x label style = {yshift=1mm},
]

\addplot+[
	boxplot,
	mark=o,
	color={colorMin80},
] table [
	y expr = ln( \thisrow{LTL_size_MaxSATDT80} / \thisrow{LTL_size_MaxSAT5} )/ln(10),
] {joinedByTraces-sucDT80.csv}
;

\addplot+[
	boxplot,
	mark=o,
	color={colorMin60},
] table [
	y expr = ln( \thisrow{LTL_size_MaxSATDT60} / \thisrow{LTL_size_MaxSAT5} )/ln(10),
] {joinedByTraces-sucDT60.csv}
;

\end{axis}
\end{tikzpicture}
\label{fig:maxsat-vs-maxsat-dt:time-boxplot:noise5}
\end{subfigure}
\vspace{-3mm}
\caption{
	On each sample of the benchmarks, comparison of the ratio of the performances of \MaxSATDT{}($\minscore$) over the performances of \MaxSAT{}, with $\thres=0.05$ for both algorithms, and where both algorithms did not time out.
}
\label{fig:maxsat-vs-maxsat-dt:time-boxplot}
\end{figure}
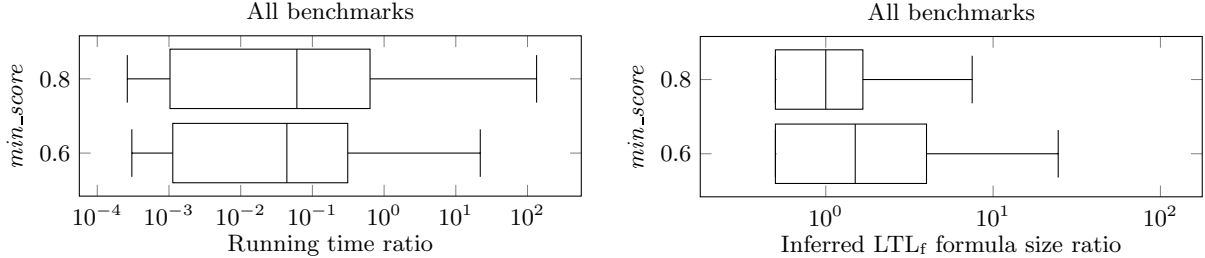

We now compare the two algorithms proposed in this paper: did \MaxSATDT{} perform any better than \MaxSAT{}?
To be able to compare learned decision trees to learned \LTLf{} formulas,
we measure the size of a tree $t$ in terms of the size of the formula $\formula_t$ this tree encodes.
Figure \ref{fig:maxsat-vs-maxsat-dt:time-boxplot} presents a comparison of the running time ratio as well as the inferred formula size ratio of these two algorithms,
on each sample of the benchmark that did not time out with both algorithms.
We observe that the running time is generally lower for \MaxSATDT{} than for \MaxSAT{}.
However, \MaxSATDT{} tends to infer larger formulas than formulas inferred by \MaxSAT{}.
This trade-off between running time and inferred formula size is more pronounced for lower values of $\minscore$.

Regarding \SATDT{} (proposed in \cite{DBLP:conf/fmcad/NeiderG18}), we observe a large number of timeouts,
especially when evaluated on the benchmark with 5\% noise.

\section{Conclusion}

We have developed two novel algorithms for inferring \LTLf{} formulas from a set of labeled traces allowing misclassifications. We have demonstrated that our algorithms are efficient in inferring formulas, especially from noisy data. As a part of future work, we like to apply our MaxSAT-based approach for inferring models in other formalisms that incorporate SAT-based learning (e.g.~\cite{DBLP:conf/ijcai/0002FN20}).

\section{Acknowledgements}

This material is based upon work supported by the Defense Advanced Research Projects Agency (DARPA) under Contract No. HR001120C0032, ARL W911NF2020132, ARL ACC-APG-RTP W911NF, NSF 1646522 and DFG grant no. 434592664.

\bibliographystyle{splncs04}
\bibliography{bib}

\begin{thebibliography}{10}
\providecommand{\url}[1]{\texttt{#1}}
\providecommand{\urlprefix}{URL }
\providecommand{\doi}[1]{https://doi.org/#1}

\bibitem{DBLP:conf/fmcad/ArifLERCT20}
Arif, M.F., Larraz, D., Echeverria, M., Reynolds, A., Chowdhury, O., Tinelli,
  C.: {SYSLITE:} syntax-guided synthesis of {PLTL} formulas from finite traces.
  In: {FMCAD}. pp. 93--103. {IEEE} (2020)

\bibitem{Asarin2021}
Asarin, E., Donz{\'e}, A., Maler, O., Nickovic, D.: Parametric identification
  of temporal properties. In: Khurshid, S., Sen, K. (eds.) Runtime
  Verification. pp. 147--160. Springer Berlin Heidelberg (2012)

\bibitem{DBLP:journals/ai/BacchusK00}
Bacchus, F., Kabanza, F.: Using temporal logics to express search control
  knowledge for planning. Artif. Intell.  \textbf{116}(1-2),  123--191 (2000)

\bibitem{Bombara2016}
Bombara, G., Vasile, C.I., Penedo, F., Yasuoka, H., Belta, C.: A decision tree
  approach to data classification using signal temporal logic. In: Proc. Int.
  Conf. Hybrid Syst.: Comput. and Control. pp. 1--10. ACM (2016)

\bibitem{DBLP:conf/jelia/BrunelloSS19}
Brunello, A., Sciavicco, G., Stan, I.E.: Interval temporal logic decision tree
  learning. In: {JELIA}. Lecture Notes in Computer Science, vol. 11468, pp.
  778--793. Springer (2019)

\bibitem{DBLP:conf/aips/CamachoM19}
Camacho, A., McIlraith, S.A.: Learning interpretable models expressed in linear
  temporal logic. In: {ICAPS}. pp. 621--630. {AAAI} Press (2019)

\bibitem{10.1145/298595.298598}
Dwyer, M.B., Avrunin, G.S., Corbett, J.C.: Property specification patterns for
  finite-state verification. In: Proceedings of the Second Workshop on Formal
  Methods in Software Practice. p. 7–15. FMSP 1998, Association for Computing
  Machinery

\bibitem{DBLP:conf/ijcai/GiacomoV13}
Giacomo, G.D., Vardi, M.Y.: Linear temporal logic and linear dynamic logic on
  finite traces. In: {IJCAI}. pp. 854--860. {IJCAI/AAAI} (2013)

\bibitem{Hoxha2017}
Hoxha, B., Dokhanchi, A., Fainekos, G.: Mining parametric temporal logic
  properties in model-based design for cyber-physical systems. International
  Journal on Software Tools for Technology Transfer  (Feb 2017)

\bibitem{Jin2015}
{Jin}, X., {Donzé}, A., {Deshmukh}, J.V., {Seshia}, S.A.: Mining requirements
  from closed-loop control models. IEEE Transactions on Computer-Aided Design
  of Integrated Circuits and Systems  \textbf{34}(11),  1704--1717 (2015)

\bibitem{DBLP:conf/ijcai/KimMSAS19}
Kim, J., Muise, C., Shah, A., Agarwal, S., Shah, J.: Bayesian inference of
  linear temporal logic specifications for contrastive explanations. In:
  {IJCAI}. pp. 5591--5598. ijcai.org (2019)

\bibitem{Kong2017}
Kong, Z., Jones, A., Belta, C.: Temporal logics for learning and detection of
  anomalous behavior. IEEE Trans. Autom. Control  \textbf{62}(3),  1210--1222
  (2017)

\bibitem{DBLP:conf/tacas/MouraB08}
de~Moura, L.M., Bj{\o}rner, N.: {Z3:} an efficient {SMT} solver. In: {TACAS}.
  Lecture Notes in Computer Science, vol.~4963, pp. 337--340. Springer (2008)

\bibitem{DBLP:conf/fmcad/NeiderG18}
Neider, D., Gavran, I.: Learning linear temporal properties. In: Bj{\o}rner,
  N., Gurfinkel, A. (eds.) 2018 Formal Methods in Computer Aided Design,
  {FMCAD} 2018. pp. 1--10. {IEEE} (2018)

\bibitem{pnueli}
Pnueli, A.: The temporal logic of programs. In: Proc. 18th Annu. Symp. Found.
  Computer Sci. pp. 46--57 (1977)

\bibitem{DBLP:journals/ml/Quinlan86}
Quinlan, J.R.: Induction of decision trees. Mach. Learn.  \textbf{1}(1),
  81--106 (1986)

\bibitem{DBLP:conf/ijcai/0002FN20}
Roy, R., Fisman, D., Neider, D.: Learning interpretable models in the property
  specification language. In: {IJCAI}. pp. 2213--2219. ijcai.org (2020)

\bibitem{Tseitin1983}
Tseitin, G.S.: On the Complexity of Derivation in Propositional Calculus, pp.
  466--483. Springer Berlin Heidelberg (1983)

\bibitem{zheletter}
Xu, Z., Birtwistle, M., Belta, C., Julius, A.: A temporal logic inference
  approach for model discrimination. IEEE Life Sciences Letters  \textbf{2}(3),
   19--22 (2016)

\bibitem{zhe2016}
Xu, Z., Julius, A.A.: Census signal temporal logic inference for multiagent
  group behavior analysis. IEEE Trans. Autom. Sci. Eng.  \textbf{15}(1),
  264--277 (Jan 2018)

\bibitem{zheCDC2019GTL}
{Xu}, Z., {Nettekoven}, A.J., {Agung Julius}, A., {Topcu}, U.: Graph temporal
  logic inference for classification and identification. In: 2019 IEEE 58th
  Conference on Decision and Control (CDC). pp. 4761--4768 (2019)

\bibitem{zhe2019ACCinfo}
{Xu}, Z., {Ornik}, M., {Julius}, A.A., {Topcu}, U.: Information-guided temporal
  logic inference with prior knowledge. In: 2019 American Control Conference
  (ACC). pp. 1891--1897 (July 2019)

\bibitem{zhe_ADHS}
Xu, Z., Belta, C., Julius, A.: Temporal logic inference with prior information:
  An application to robot arm movements. IFAC Conference on Analysis and Design
  of Hybrid Systems (ADHS) pp. 141 -- 146 (2015)

\bibitem{zhe2019privacy}
Xu, Z., Julius, A.A.: Robust temporal logic inference for provably correct
  fault detection and privacy preservation of switched systems. IEEE Systems
  Journal  \textbf{13}(3),  3010--3021 (2019)

\end{thebibliography}

\end{document}